\newtheorem*{rep@theorem}{\rep@title}
\newcommand{\newreptheorem}[2]{%
\newenvironment{rep#1}[1]{%
 \def\rep@title{#2 \ref{##1}}%
 \begin{rep@theorem}}%
 {\end{rep@theorem}}}
\newcommand{\RNum}[1]{\uppercase\expandafter{\romannumeral #1\relax}}
\newcommand{\dx}{\mathrm{d}x}
\newcommand{\cN}{\mathcal{N}}
\newcommand{\diff}{\mathrm{d}}
\newcommand{\vertiii}[1]{{\left\vert\kern-0.25ex\left\vert\kern-0.25ex\left\vert #1
    \right\vert\kern-0.25ex\right\vert\kern-0.25ex\right\vert}}
\newcommand{\vertiiii}[1]{{\vert\kern-0.25ex\vert\kern-0.25ex\vert #1
    \vert\kern-0.25ex\vert\kern-0.25ex\vert}}
\newcommand\norm[1]{\left\lVert#1\right\rVert} % Norm
\DeclareMathOperator*{\Id}{\mathrm{Id}}
\newcommand{\xhdr}[1]{{\noindent\bfseries #1}.}
\newcommand{\cut}[1]{}
\newcommand{\removelatexerror}{\let\@latex@error\@gobble}
\def\1{\bm{1}}
\DeclareMathAlphabet{\mathsfit}{\encodingdefault}{\sfdefault}{m}{sl}
\SetMathAlphabet{\mathsfit}{bold}{\encodingdefault}{\sfdefault}{bx}{n}
\newcommand{\pdata}{p_{\rm{data}}}
\newcommand{\softmax}{\mathrm{softmax}}
\newcommand{\Var}{\mathrm{Var}}
\newcommand{\utot}{u^\star}
\newcommand{\utheta}{u_{\theta}}
\newcommand{\utothat}{\hat{u}^\star}
\newcommand{\utothatemp}{\hat{u}_M^\star}
\newcommand{\normin}[1]{ \lVert {#1} \rVert}
\newcommand{\data}[1]{x^{(#1)}}
\newcommand{\batch}[1]{b^{(#1)}}
\newcommand{\indep}{\perp \!\!\! \perp}
\newcommand{\ie}{{\em i.e.,~}}
\newcommand{\eg}{{\em e.g.,~}}
\setlist{topsep=0pt, leftmargin=*}
\newlist{lemmaenum}{enumerate}{1} % also creates a counter called 'lemmaenum'
\setlist[lemmaenum]{
  label=\emph{\roman*)},
  ref=\thedefinition~\emph{\roman*)}}
\newlist{thmenum}{enumerate}{1} % also creates a counter called 'lemmaenum'
\setlist[thmenum]{label=\emph{\roman*)}, ref=\thetheorem~\emph{\roman*)}}
\newlist{propenum}{enumerate}{1} % creates a list and a counter called 'propenum'
\setlist[propenum]{
  label=\emph{\roman*)},
  ref=\theproposition~\emph{\roman*)}
}
 \newcommand{\dz}{~\mathrm{d}z}
\def\pdata{p_{\mathrm{data}}}
\def\pnot{p_0}
\def\pdatahat{\hat{p}_{\mathrm{data}}}
\def \ucond{u^{\mathrm{cond}}}
\newtheorem{theorem}{Theorem}
\newtheorem*{theorem*}{Theorem}
\newtheorem{proposition}[theorem]{Proposition}
\newtheorem*{proposition*}{Proposition}
\newtheorem*{example*}{Example}
\Crefname{lemmaenumi}{lemma}{Lemmas}
\Crefname{lemma}{Lemma}{Lemmas}
\Crefname{assumption}{Assumption}{Assumptions}
\Crefname{proposition}{Proposition}{Propositions}
\definecolor{darkgreen}{rgb}{0.0, 0.4, 0.13}
\title{On the Closed-Form of Flow Matching: Generalization Does Not Arise from Target Stochasticity}
\date{}
\author{%
  Quentin Bertrand\textsuperscript{1}\textsuperscript{5}\thanks{Equal contribution. Correspondence: \texttt{quentin.bertrand@inria.fr}.}, \;
  % \\
  % \And
  Anne Gagneux\textsuperscript{2}\footnotemark[1], \;
  %  \\
  % \And
  Mathurin Massias\textsuperscript{3}\footnotemark[1], \;
  % \And
  Rémi Emonet\textsuperscript{1}\textsuperscript{4}\footnotemark[1]
  % \\
  \\
  \textsuperscript{1}Université Jean Monnet Saint-Étienne, CNRS, Institut d’Optique Graduate School, \\ Inria, Laboratoire Hubert Curien UMR 5516, F-42023 Saint-Étienne, France \\
  \textsuperscript{2}ENS de Lyon, CNRS, Université Claude Bernard Lyon 1,  Inria, \\ LIP UMR 5668, 69342 Lyon Cedex 07, France \\
  \textsuperscript{3}Inria, ENS de Lyon, CNRS, Université Claude Bernard Lyon 1, \\ LIP UMR 5668, 69342 Lyon Cedex 07, France \\
  \textsuperscript{4}Institut Universitaire de France \\
  \textsuperscript{5} Mila - Quebec AI Institute \\
  Code: \url{https://github.com/generativemodels/closedformfm}
}
\begin{document}

\maketitle

\begin{abstract}
    % !TEX root = ../main.tex

\looseness-1
Modern deep generative models can now produce high-quality synthetic samples that are often indistinguishable from real training data. A growing body of research aims to understand why recent methods, such as diffusion and flow matching techniques, generalize so effectively. Among the proposed explanations are the inductive biases of deep learning architectures and the stochastic nature of the conditional flow matching loss.
In this work, we rule out the noisy nature of the loss as a key factor driving generalization in flow matching.
First, we empirically show that in high-dimensional settings, the stochastic and closed-form versions of the flow matching loss yield nearly equivalent losses. Then, using state-of-the-art flow matching models on standard image datasets, we demonstrate that both variants achieve comparable statistical performance, with the surprising observation that using the closed-form can even improve performance.

\end{abstract}

% !TEX root = ../main.tex

%%%%%%%%%%%%%%%%%%%%%%%%%%%%%%%%%%%%%%%%%%%%%%%%%%%%%%%%%%%%%%%%%%%%%%%%%%%%%
\section{Introduction}
%%%%%%%%%%%%%%%%%%%%%%%%%%%%%%%%%%%%%%%%%%%%%%%%%%%%%%%%%%%%%%%%%%%%%%%%%%%%%
Recent deep generative models, such as diffusion \citep{sohlDickstein2015diffusion,Ho2020,song2020diffusion} and flow matching models \citep{lipman2023flow,albergo2023stochasticinterpolant,liu2023rectifiedflow}, have achieved remarkable success in synthesizing realistic data across a wide range of domains.
State-of-the-art diffusion and flow matching methods are now capable of producing multi-modal outputs that are virtually indistinguishable from human-generated content, including images \citep{StablediffXL}, audio \citep{Borsos2023}, video \citep{Villegas2022,videoworldsimulators2024}, and text \citep{Gong2022diffusiontext,Xu2025energytext}.
% Recent deep generative models such as diffusion \citep{sohlDickstein2015diffusion,Ho2020,song2020diffusion} and flow matching \citep{lipman2023flow,albergo2023stochasticinterpolant,liu2023rectifiedflow} have demonstrated remarkable successes in synthesizing realistic data across diverse domains.
% Today state-of-the-art diffusion and flow matching models can produce multi-modal generations virtually indistinguishable from human-created content, images \citep{StablediffXL}, audio \citep{Borsos2023}, videos \citep{Villegas2022,videoworldsimulators2024}, and text generation \citep{Gong2022diffusiontext,Xu2025energytext}.

A central question in deep generative modeling concerns the generalization capabilities and underlying mechanisms of these models.
Generative models generalization remains a puzzling phenomenon, raising a number of challenging and unresolved questions: whether generative models truly generalize is still the subject of active debate.
On one hand, several studies \citep{Carlini2023,somepalli2023diffusion,Somepalli2023understanding,Dar2023investigating} have shown that large diffusion models are capable of memorizing individual samples from the training set, including licensed photographs, trademarked logos, and sensitive medical data.

On the other hand, \citet{kadkhodaie2024generalization} have empirically demonstrated that while memorization can occur in low-data regimes, diffusion models trained on a \emph{sufficiently large} dataset exhibit clear signs of generalization. Taken together, recent work points to a sharp phase transition between memorization and generalization \citep{yoon2023diffusion,zhang2024emergencerepro}.
Multiple theories have been proposed to explain the puzzling generalization of diffusion and flow matching models.
On the one hand, \citet{kadkhodaie2024generalization,kamb2024analytic,ross2025memorization} suggested a geometric framework to understand the inductive bias of modern deep convolutional networks on images.
On the other hand, \citet{Vastola2025generalization} suggested that generalization is due to the \emph{noisy} nature of the training loss.
In this work, we clearly answer the following question:

\begin{center}
  \textit{Does training on noisy/stochastic targets improve flow matching generalization? \\
If not, what are the main sources of generalization?}
\end{center}

\xhdr{Contributions}
\begin{itemize}
    \item We challenge the prevailing belief that generalization in flow matching stems from an inherently noisy loss (\Cref{sub:target_sto}). This assumption, largely supported by studies in low-dimensional settings, fails to hold in realistic high-dimensional data regimes.

    \item Instead, we observe that generalization in flow matching emerges precisely when the limited-capacity neural network fails to approximate the \emph{optimal closed-form velocity field} (\Cref{sub:phasesapprox}).

    \item We identify two critical time intervals, at early and late times, where \emph{neural networks fail to approximate the optimal velocity field} (\Cref{sub:gentimes}). We show that generalization arises predominantly early along flow matching trajectories, aligning with the transition from the stochastic to the deterministic regime of the flow matching objective.

    \item Finally, on standard image datasets (CIFAR-10 and CelebA), we show that explicitly regressing against the optimal closed-form velocity field does not impair generalization and can, in some cases, enhance it (\Cref{sec:learning}).
\end{itemize}

The manuscript is organized as follows. \Cref{sec:cfmrecalls} reviews the fundamentals of conditional flow matching and recalls the closed-form of the ``optimal'' velocity field. Leveraging the closed-form expression of the flow matching velocity field, \Cref{sec:investigategeneralization} investigates the key sources of generalization in flow matching. In \Cref{sec:learning}, we introduce a learning algorithm based on the closed-form formula. Related work is discussed in detail in \Cref{sec:relatedwork}.

\section{Recalls on conditional flow matching}
\label{sec:cfmrecalls}
%%%%%%%%%%%%%%%%%%%%%%%%%%%%%%%%%%%%%%%%%%%%%%%%%%%%%%%%%%%%%%%%%%%%%%%%%%%%%%%%%%

Let $\pnot = \cN(0, \Id)$ be the source distribution\footnote{the choice $p_0= \cN(0, \Id)$ is made for simplicity; more generic choices are possible and the reader can refer to \citet{Lipman2024guide,Gagneux2025visual_flow_matching,diffusion_fm_blogpost} for deeper introductions to flow matching.} and $\pdata$ the data distribution.
We are given $n$ data points $\data{1}, \dots, \data{n} \sim \pdata$, $\data{i} \in \mathbb{R}^d$.
The goal of flow matching is to find a velocity field $u: \mathbb{R}^d \times [0, 1] \to \mathbb{R}^d$, such that, if one solves on $[0,1]$ the ordinary differential equation
\begin{equation}\label{eq:ode}
    \begin{cases}
      x(0) = x_0 \in \mathbb{R}^d \\
      \dot x(t) = u(x(t), t)
    \end{cases}
\end{equation}
\looseness-1
then the law of $x(1)$ when $x_0\sim p_0$ is $\pdata$: one says that $u$ \emph{transports} $p_0$ to $\pdata$.
For every value of $t$ between $0$ and $1$, the law of $x(t)$ defines a \emph{probability path}, denoted $p(\cdot |t)$ that progressively transforms $p_0$ to $p_{\mathrm{data}}$.
If one knows the velocity field $u$, new samples can then be generated by sampling $x_0$ from $p_0$, solving the ordinary differential equation, and using $x(1)$ as the generated point.

In conditional flow matching, finding such a velocity field $u$ is achieved in the following way.
%%%%%%%%%%%%%%%%%%%%%%%%%%%%%%%%%%%%%%%%%%%%%%%%%%%%%%%%%%%%%%%%%
%%%%%%%%%%%%%%%%%%%%%%%%%%%%%%%%%%%%%%%%%%%%%%%%%%%%%%%%%%%%%%%%%
\begin{enumerate}[label=(\roman*)]
    \item First, define a conditioning variable $z$ independent of $t$, \eg $z = x_1 \sim \pdata$, \label{cond_item:1}
    \item Then, chose a conditional probability path $p(\cdot | z, t)$, \eg $p(\cdot | z=x_1, t) = \cN(t x_1, (1 -t)^2 \Id)$. \label{cond_item:2}
\end{enumerate}
%%%%%%%%%%%%%%%%%%%%%%%%%%%%%%%%%%%%%%%%%%%%%%%%%%%%%%%%%%%%%%%%%
%%%%%%%%%%%%%%%%%%%%%%%%%%%%%%%%%%%%%%%%%%%%%%%%%%%%%%%%%%%%%%%%%
Through the continuity equation~\citep[Sec. 3.5]{Lipman2024guide}, the choice~\ref{cond_item:2} of the conditional probability path $p(\cdot |z,t)$ defines a conditional velocity field $\ucond (x, z, t)$.
With the choices~\ref{cond_item:1} and~\ref{cond_item:2}, the conditional velocity field writes
\begin{align}
    \ucond(x, z=x_1, t) = \frac{x_1 - x}{1-t}
    \enspace.
\end{align}
% \looseness-1
The choice~\ref{cond_item:2} of the conditional probability paths $p(\cdot | z=x_1, t)$ fully defines a probability path $p(\cdot|t)$ (by marginalization against $z$) and thus defines an \emph{optimal velocity field} $\utot$ (through the continuity equation), that transports $p_0$ to $\pdata$ \citep[Thm. 1]{lipman2023flow}
\begin{empheq}[box=\fcolorbox{blue!40!black!10}{green!05}]{equation}
    \label{eq_inversion_formula}
    \utot(x, t) = \mathbb{E}_{z | x, t} \: \ucond(x, z, t)
    \enspace.
\end{empheq}
Hence, the optimal velocity $\utot$ could be approximated by a neural network $u_\theta: \mathbb{R}^d \times [0, 1] \to \mathbb{R}^d$ with parameters $\theta$ by minimizing
\begin{empheq}[box=\fcolorbox{blue!40!black!10}{green!05}]{equation}
\mathcal{L}_{\mathrm{FM}}(\theta) =
    \mathbb{E}_{\substack{ t \sim \mathcal{U}([0, 1]) \\ x_t \sim p(\cdot|t) }}
    \Vert u_\theta(x_t, t) - \utot(x_t, t) \Vert^2
    \enspace.
\end{empheq}
However, $\utot$ is usually (believed) intractable, as a remedy, \citet[Thm. 2]{lipman2023flow} showed that $\mathcal{L}_{\mathrm{FM}}(\theta)$ is equal, up to a constant, to the conditional flow matching loss.
With the choices~\ref{cond_item:1} and~\ref{cond_item:2} made above, the conditional flow matching loss reads
% \begin{equation}

\begin{empheq}[box=\fcolorbox{blue!40!black!10}{green!05}]{equation}
    \label{eq:l_cfm}
    \mathcal{L}_{\mathrm{CFM}}(\theta) =
    \mathbb{E}_{
        \substack{
            % x_0 \sim \pnot \\
            x_0 \sim \pnot \\
            x_1 \sim \pdata \\
            t \sim \mathcal{U}([0, 1])\\
            % x_t = (1 - t) x_0 + t x_1 \\
             }}
    \Vert u_\theta(x_t, t) - \underbrace{\ucond(x_t, z=x_1, t)}_{
        = \frac{x_1 - x_t}{1-t} = x_1 - x_0
    } \Vert^2
    ,
    % \, \mathrm{where} \, x_t := (1-t) x_0 + t x_1 \, .
\end{empheq}
% Note that, since $x_t := (1-t)x_0 + t x_1$, we have that $\ucond(x_t, z=x_1, t)$ is both equal to $ \frac{x_1 - x}{1-t}$and to $x_1 - x_0$.
%
where $x_t := (1-t) x_0 + t x_1$.
The objective $\mathcal{L}_{\mathrm{CFM}}$ is easy to approximate, since it is easy to sample from $p_0 = \mathcal{N}(0, \Id)$ and $\mathcal{U} ([0, 1])$;
% and $p(\cdot | z=x_1, t) = \mathcal{N}(tx_1, (1-t)^2 \Id )$;
sampling from $\pdata$ is approximated by sampling from $\pdatahat := \frac{1}{n} \sum_{i=1}^n \delta_{\data{i}}$.
Although it seems natural, replacing $\pdata$ by $\pdatahat$ in \eqref{eq:l_cfm} has a very important consequence: it makes the minimizer $\utothat$ of $\mathcal{L}_{\mathrm{FM}}$ available in closed-form, which we recall below.

%%%%%%%%%%%%%%%%%%%%%%%%%%%%%%%%%%%%%%%%%%%%%%%%%%%%%%%%%%%%%%%%%%%%%%%%%
%%%%%%%%%%%%%%%%%%%%%%%%%%%%%%%%%%%%%%%%%%%%%%%%%%%%%%%%%%%%%%%%%%%%%%%%%
\begin{proposition}[Closed-form Formula of the Optimal Velocity]
    \label{prop_closed_form_velocity}
    When $\pdata$ is replaced by $\pdatahat$, with the previous choices~\ref{cond_item:1} and~\ref{cond_item:2},
    the optimal velocity field $\utothat$ in \eqref{eq_inversion_formula} has a closed-form formula:
    \begin{empheq}[box=\fcolorbox{blue!40!black!10}{green!05}]{align}
        \label{eq:closed_form}
        \utothat(x, t) =
            \sum_{i=1}^n \lambda_i(x, t) \frac{\data{i} - x}{1-t} \enspace,
        \end{empheq}
    with      $        \lambda(x, t) = \softmax (
                ( -\frac{\Vert x - t \data{j}\Vert^2}{2(1 -t)^2}
                )_{j=1,\ldots, n}  ) \in \mathbb{R}^n.$
        % MM this is not lambda_i
        % QBE what do you mean?
    % \begin{align}
    % This formula also holds for a Gaussian source distribution and $z \sim p_0 \times \pdatahat$ (\Cref{app:pf_closed_form_velocity}).
\end{proposition}
%%%%%%%%%%%%%%%%%%%%%%%%%%%%%%%%%%%%%%%%%%%%%%%%%%%%%%%%%%%%%%%%%%%%%%%%%
The notation $\utothat$ emphasizes the velocity field is optimal for the \emph{empirical} probability distribution $\pdatahat$, not the true one $\pdata$.
Since $\ucond(x, z=\data{i}, t) \propto \data{i} - x$, the optimal velocity field $\utothat$ is a weighted average of the $n$ different directions $\data{i} - x$.
% More specifically, it writes as follows:
% The
Note that the closed-form formula in~\Cref{eq:closed_form}
% of \Cref{prop_closed_form_velocity}
 can be found in various previous works, \eg  \citet[Eq. 3]{kamb2024analytic}, \citet{Biroli2024}, \citet{gao2024flow}, \citet{li2024goodscore} or \citet{Scarvelis2023}, and
can be generalized to other choices of continuous distribution $p_0$ (\eg the uniform distribution, see \Cref{app:pf_closed_form_velocity}).
% Interestingly, for some continuous initial distribution $p_0$, the target velocity field $\utothat(x, t)$ is tractable and can be computed with no learning. \Cref{ex_closed_form_gaussian} provides the tractable closed-form formula for a Gaussian initial distribution.
%%%%%%%%%%%%%%%%%%%%%%%%%%%%%%%%%%%%%%%%%%%%%%%%%%%%%%%%%%%%%%%%%%%%%%%%%
% \begin{example}[Gaussian Source Distribution]\label{ex_closed_form_gaussian}
%     With $z \sim \pdatahat$, the choice of a Gaussian source distribution and a linearly shrinking standard deviation, \ie
%     \begin{align}
%     \pnot = \N(0, \Id)
%     \; \text{ and } \;
%     x | (t, z=\data{i}) \sim \N(t \data{i}, (1 - t)^2 \cdot \Id)
%     \enspace,
%     \end{align}
%     yields the following closed-form formula for the velocity field
%     % \begin{align}
%     \begin{empheq}[box=\fcolorbox{blue!40!black!10}{green!05}]{align}
%     \label{eq:closed_form}
%     \utothat(x, t) =
%         \sum_{i=1}^n \lambda_i(x, t) \frac{\data{i} - x}{1-t}
%         % \right)
%     \enspace,
%     \end{empheq}
%     % MM this is not lambda_i
%     with $\lambda(x, t) = \softmax( (-\frac{1}{2(1 -t)^2} \Vert x - t \data{i'}\Vert^2 )_{i'=1,\ldots, n} ) \in \mathbb{R}^n$.
%     This formula also holds for a Gaussian source distribution and $z \sim p_0 \times \pdatahat$ (\Cref{app:pf_closed_form_velocity}).
% \end{example}

From \Cref{eq:closed_form}, as $t \to 1$, the velocity field $\utothat$ diverges at any point $x$ that does not coincide with one of the training samples $\data{i}$, and it points in the direction of the nearest $\data{i}$.
This creates a paradox: solving the ordinary differential equation \eqref{eq:ode} with the velocity field $\utothat$ can only produce training samples $\data{i}$ (see \citealt[Thm. 4.6]{gao2024flow} for a formal proof).
Therefore, in practice, exactly minimizing the conditional flow matching loss would result in $u_\theta = \utothat$, meaning the model memorizes the training data and fails to generalize.
This naturally yields the following question:

\begin{center}
    \emph{How can flow matching generalize if the optimal velocity field only generates training samples?}
\end{center}

% !TEX root = ../main.tex
%%%%%%%%%%%%%%%%%%%%%%%%%%%%%%%%%%%%%%%%%%%%%%%%%%%%%%%%%%%%%%%%%%%%%%%%%%%%%%%%
\section{Investigating the key sources of generalization}
\label{sec:investigategeneralization}
%%%%%%%%%%%%%%%%%%%%%%%%%%%%%%%%%%%%%%%%%%%%%%%%%%%%%%%%%%%%%%%%%%%%%%%%%%%%%%%%
In this section, we investigate the key sources of flow matching generalization using the closed-form formula of its velocity field.
First in \Cref{sub:target_sto} we challenge the claim that generalization stems from the stochastic approximation $\ucond$ of the optimal velocity field $\utothat$.
Then, in \Cref{sub:phasesapprox} we show that generalization arises when $\utheta$ fails to approximate the perfect velocity $\utothat$.
Interestingly, the target velocity estimation particularly fails at two critical time intervals.
\Cref{sub:gentimes} shows that one of these critical times is particularly important for generalization.

%%%%%%%%%%%%%%%%%%%%%%%%%%%%%%%%%%%%%%%%%%%%%%%%%%%%%%%%%%%%%%%%%%%%%%%%%%%%%%%%%%
\subsection{Target stochasticity is not what you need}
\label{sub:target_sto}
%%%%%%%%%%%%%%%%%%%%%%%%%%%%%%%%%%%%%%%%%%%%%%%%%%%%%%% To keep just in case %%%%%%%
\begin{figure}[htbp]
    \centering
    \begin{subcaptionbox}{\textbf{Non stochasticity of $\utothat$ for high-dimensional real data}. \label{fig:hist_cosine}}[0.95\linewidth]
        {
            \centering
            \includegraphics[height=120px]{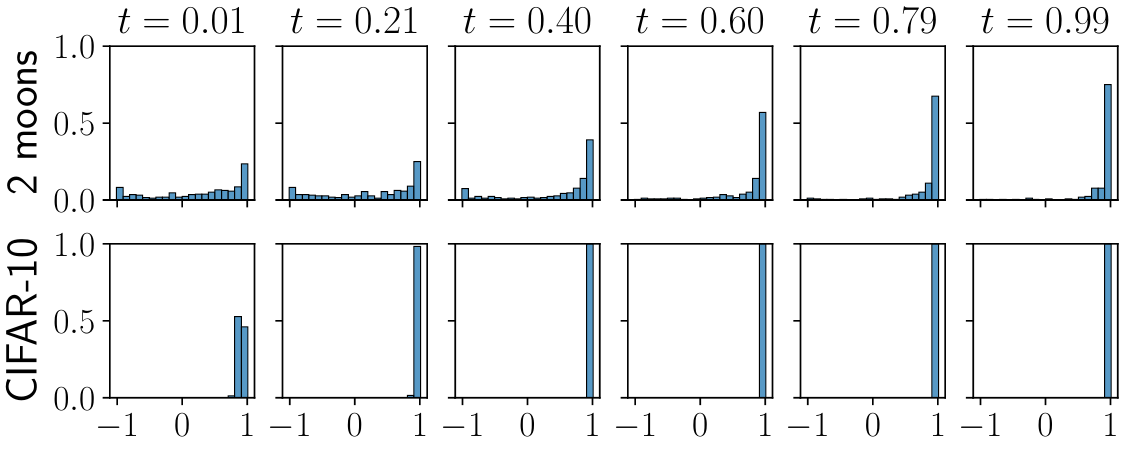}
        }
    \end{subcaptionbox}
    \hfill
    \begin{subcaptionbox}{\textbf{Stochasticity vs. non-stochasticity}\label{fig:illu_nonsto}}[0.45\linewidth]
        {
            \includegraphics[height=100px]{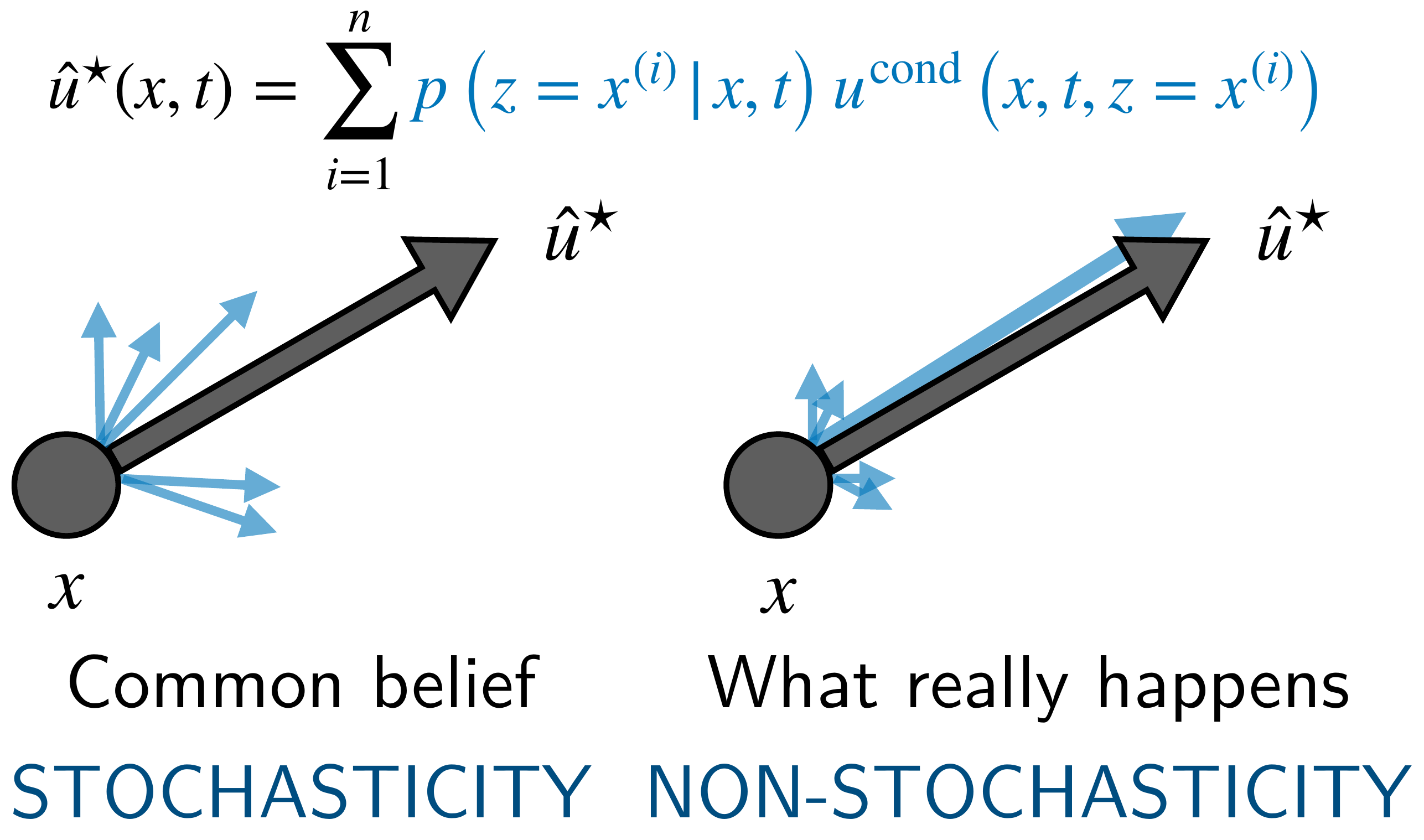}
        }
    \end{subcaptionbox}
    \begin{subcaptionbox}{\textbf{Dimension Dependence}\label{fig:collapse_times}}[0.45\linewidth]
        {
            \includegraphics[height=100px]{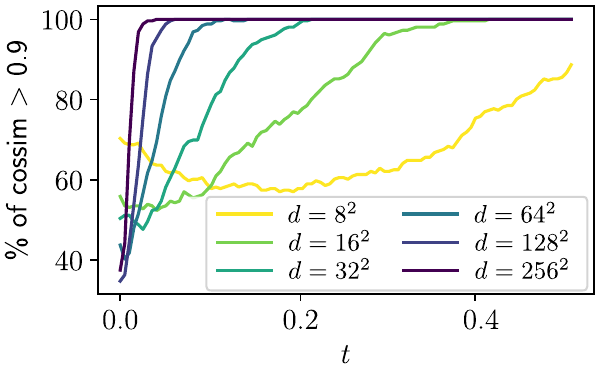}
        }
    \end{subcaptionbox}
    \caption{We challenge the hypothesis that target stochasticity plays a major role in flow matching generalization.
    In \Cref{fig:hist_cosine}, the histograms of the cosine similarities between $\utothat((1-t) x_0 + t x_1, t)$ and $\ucond((1-t)  x_0 + t  x_1, z=x_1, t) = x_1 - x_0$ are displayed for various time values $t$ and two datasets.
    % The histograms of the cosine similarities between $\utothat((1-t) \cdot x_0 + t \cdot  x_1, t)$ and $\ucond((1-t) \cdot  x_0 + t \cdot  x_1, t, (x_0, x_1)) = x_1 - x_0$ are displayed for various time values $t$ and two datasets.
    % \emph{Top row}: toy data ($d=2$), \emph{bottom row}: CIFAR-10 ($d= 3072$); $n=50$k for both.
    \textit{For real, high-dimensional data, non-stochasticity arises very early} (before $t = 0.2$  for CIFAR-10 with dimension $(3,32,32)$).
    \Cref{fig:collapse_times} displays the alignment between $\utothat$ and $\ucond$ over time for varying image dimensions $d$ on Imagenette.
    }
    \label{fig:main}
\end{figure}

One recent hypothesis is that generalization arises from the fact that the regression target $\ucond$ of conditional flow matching is only a stochastic estimate of $\utothat$.
The fact that the target regression objective only equals the true objective on average is referred to by \citet{Vastola2025generalization} as ``generalization through variance''.
%%%%%%%%%%% TO KEEP IMPORTANT
% \begin{minipage}{0.55\linewidth}
% \paragraph{Non-stochasticity}
% \Cref{prop_closed_form_velocity} shows that the optimal velocity field $\utothat(x, t)$ is a weighted sum of the $n$ values of $\ucond(x, t, z=\data{i} ) = \frac{x^{(i)} - x}{1 - t}$, for $i \in [n]$.
% One common hypothesis is that generalization arises from the fact that the regression target $\ucond$ of conditional flow matching is only a stochastic estimate of $\utothat$.
% The fact that the target regression objective only equals the true objective on average is referred to by \citet{Vastola2025generalization} as ``generalization through variance''.
% \end{minipage}
% \hfill
% \begin{minipage}{0.39\linewidth}
%     \centering
%     % \begin{figure*}
%     \includegraphics[width=0.88\linewidth]{figures/scheme_stoch.pdf}
%     % \caption{}
%     \captionof{figure}{Stochasticity vs. non-stochasticity}
%     \label{fig:illu_nonsto}
%     % \end{figure*}
% \end{minipage}
%%%%%%%%%%% TO KEEP IMPORTANT
To challenge this assumption, we leverage \Cref{prop_closed_form_velocity}, which
states that the optimal velocity field $\utothat(x, t)$ is a weighted sum of the $n$ values of $\ucond(x, t, z=\data{i} ) = \frac{x^{(i)} - x}{1 - t}$, for $i \in [n]$, and show that, after a \emph{small time value} $t$, this average is in practice equal to a single value in the expectation (see \Cref{fig:illu_nonsto,fig:hist_cosine}).
% , a phenomenon we refer to as \emph{non stochasticity} of $\utothat$ (\Cref{fig:illu_nonsto}).

\xhdr{Comments on \Cref{fig:hist_cosine}}
To produce \Cref{fig:hist_cosine}, we sample $256$ pairs $(x_0, x_1)$ from $p_0 \times \hat{p}_\mathrm{data}$. For each value of $t$, we compute the cosine similarity between the optimal velocity field $\utothat((1 - t) x_0 + t x_1, t)$ and the conditional target $\ucond((1 - t) x_0 + t x_1, z=x_1, t) = x_1 - x_0$. The resulting similarities are aggregated and shown as histograms.
The top row displays the results for the two-moons toy dataset ($d=2$), and the bottom row displays the results for the CIFAR-10 dataset (\citealt{Krizhevsky2009}, $d= 3072$); $n=50$k for both.
%
% \xhdr{Comments on \Cref{fig:hist_cosine}}
As $t$ increases, the histograms become increasingly concentrated around $1$, indicating that $\utothat$ aligns closely with a single conditional vector $\ucond$.
From \Cref{eq:closed_form}, this corresponds to a collapse towards 0 of all but one of the softmax weights $\lambda_i(x_t, t)$.
This time corresponds to the collapse time studied by \citet{Biroli2024} for diffusion; we discuss the connection in the related works (\Cref{sec:relatedwork}).
On the two-moons toy dataset, this transition occurs for intermediate-to-large values of $t$, echoing the observations made in low-dimensional settings by \citet[Figure 1]{Vastola2025generalization}.
In contrast, for high-dimensional real datasets, $\utothat(x, t)$ aligns with a single conditional velocity field $x^{(i)} - x$, even at early time steps, suggesting that the non-stochastic regime dominates most of the generative process.
This key difference between low- and high-dimensional data suggests that the transition time between the stochastic and non-stochastic regimes is strongly influenced by the dimensionality of the data.

\xhdr{Comments on \Cref{fig:collapse_times}}
To further illustrate the strong impact of dimensionality, \Cref{fig:collapse_times} reports the proportion of samples $x_t$ (from a batch of 256) for which the cosine similarity between $\utothat$ and  $\ucond \propto x^{(i)} - x$ exceeds $0.9$, as a function of time $t$. This analysis is performed across multiple spatial resolutions of the Imagenette dataset \citep{Howard_Imagenette_2019}, obtaining $\mathrm{dim} \times \mathrm{dim}$ images by spatial subsampling.
\Cref{fig:collapse_times} reveals a sharp transition: as the dimensionality increases, the proportion of high-cosine matches rapidly converges to 100\%.
A practical implication of this behavior is that, for sufficiently large $t$, if $x_0 \sim p_0$ and $\data{i} \sim \hat{p}_\mathrm{data}$,
then $\utothat((1 - t) x_0 + t \data{i}, t)$ is approximately proportional to $\data{i} - x$. Consequently, regressing on $\data{i}$ or on the conditional velocity $x_1 - x_0$ becomes effectively equivalent.
\Cref{sec:learning} investigates how to learn regressing against optimal velocity field $\utothat$, and empirically shows similar results between stochastic and non-stochastic targets.

The regime where flow matching matches stochasticity is mostly concentrated on a very short time interval, for small values of $t$.
We hypothesize that the phenomenon observed here on the optimal velocity field $\utothat$ has major implications on the \emph{learned} flow matching model $u_\theta$, which we further inspect in the next section.
% In the sequel, we investigate how the behaviors observed on $\utothat$ translate to the learned network $\utheta$.

% For both small and real datasets, a transition from stochasticity to non-stochasticity can be identified, dividing the generative process into two distinct phases.
%%%%%%%%%%% TO KEEP IMPORTANT
% More precisely, we investigate whether the temporal behaviours observed on $\utot$ translate to $\utheta$, and whether the critical time from stochasticity to non-stochasticity corresponds to a transition from a phase where "``creativity"'' emerges to a second regime, where the final image would be already determined.
%%%%%%%%%%% TO KEEP IMPORTANT
%%%%%%%%%%%%%%%%%%%%%%%%%%%%%%%%%%%%%%%%%%%%%%%%%%%%%%%%%
\subsection{Failure to learn the optimal velocity field}
\label{sub:phasesapprox}
%%%%%%%%%%%%%%%%%%%%%%%%%%%%%%%%%%%%%%%%%%%%%%%%%%%%%%%%%
This subsection investigates how well the learned velocity field $\utheta$ approximates the optimal/ideal velocity field $\utothat$, and how the quality of this approximation correlates with generalization.
To do so, we propose the following experiment.

\xhdr{Set up of \Cref{fig_dist_ustar}}
To build \Cref{fig_dist_ustar}, we subsampled the CIFAR-10 dataset from $10$ to $10^4$ samples.
%For each size, we trained a flow matching model using a standard $34$ million-parameter U-Net \citep{nichol2021improved}, with default settings from \texttt{torchfm} \citep{tong2023improving}.
For each size, we trained a flow matching model using a standard $34$ million-parameter U-Net (see \Cref{app:expes_details} for details).
Following \citet{kadkhodaie2024generalization}, the number of parameters of the network $\utheta$ remains fixed across dataset sizes.
Importantly, the optimal velocity field $\utothat$ itself depends on the dataset size: as the number of samples increases, the complexity of $\utothat$ also grows.
Thus, we expect the network $\utheta$ to accurately approximate the optimal velocity field $\utothat$ for smaller dataset sizes.

\begin{figure}[tb]
    \begin{center}
        \includegraphics[width=1\linewidth]{../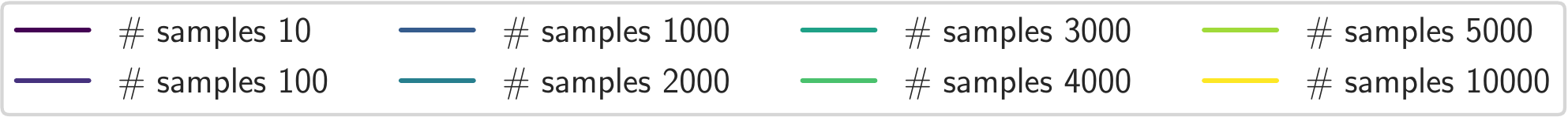}
        \includegraphics[width=1\linewidth]{../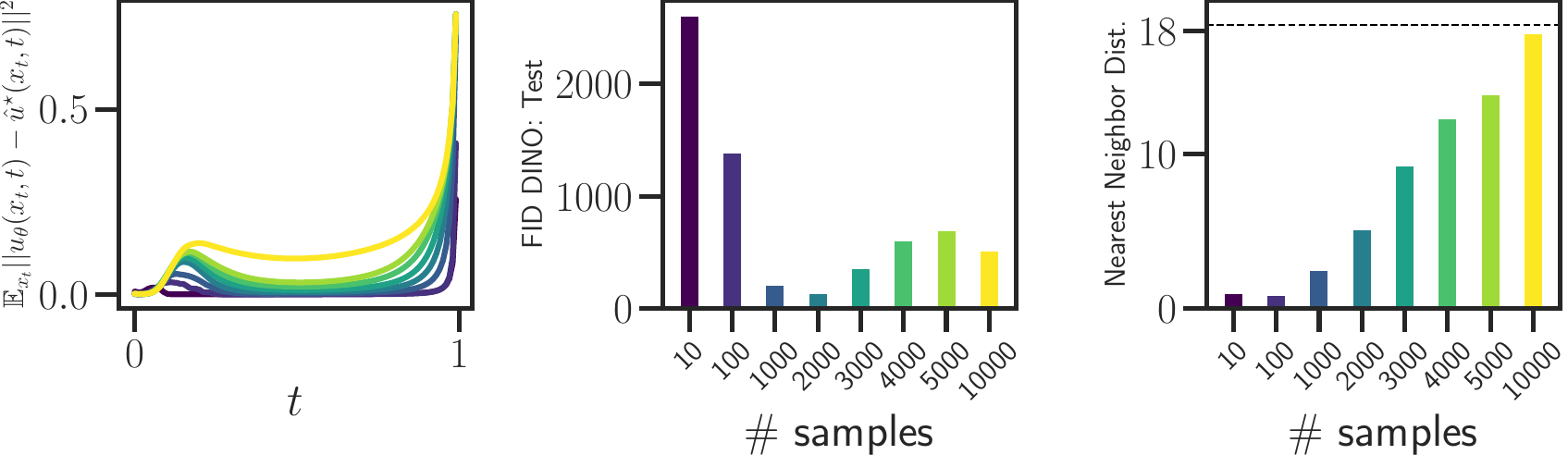}
    \end{center}
    \caption{\textbf{Failure to learn the optimal velocity field, CIFAR-10}. \emph{Left}: The leftmost figure represents the average error between the optimal empirical velocity field $\utothat$ and the learned velocity $\utheta$ for multiple values of time $t$.
    \emph{Middle}:
    The middle figure displays the FID-10k computed on the test dataset, using the DINOv2 embedding.
    \emph{Right}: The rightmost figure displays the average distance between the generated samples and their closest image from the training set -- for reference, the horizontal dashed line indicates the mean distance between an image of CIFAR-10 train and its nearest neighbor in the dataset.
    All the quantities are computed/learned on a varying number of training samples ($10$ to $10^4$) of the CIFAR-10 dataset.
    }
    \label{fig_dist_ustar}
\end{figure}

\xhdr{Comments on \Cref{fig_dist_ustar}}
The leftmost plot shows the average training error
\begin{equation*}
    \mathbb{E}_{\substack{x_0 \sim p_0 \\ x_1 \sim \pdatahat}}
    \left\| u_\theta(x_t, t) - \utothat(x_t, t) \right\|^2, \quad \text{where} \quad x_t := (1-t)x_0 + t x_1
    \enspace,
\end{equation*}
between the learned velocity $\utheta$ and the optimal empirical velocity field $\utothat$, evaluated across multiple time values $t$ and dataset sizes.
With only $10$ samples (darkest curve), the network $\utheta$ closely approximates $\utothat$. As the dataset size increases, the complexity of $\utothat$ grows, and the approximation by $\utheta$ becomes less accurate.
In particular, the approximation fails at two specific time intervals: around $t \approx 0.15$ and near $t = 1$.
The failure near $t = 1$ is expected, as $\utothat$ becomes non-Lipschitz at $t=1$.
Interestingly, the early-time failure at $t \approx 0.15$ corresponds to the regime where $\utothat$ and $\ucond$ start to correlate (see \Cref{fig:hist_cosine} in \Cref{sub:target_sto}).
The middle plot of \Cref{fig_dist_ustar} reports the FID-10k, computed on the test set in the DINOv2 embedding space \citep{dinov2}, for various dataset sizes.
For a small dataset (\eg $ \# \text{samples} = 10$), $\utheta$ approximates $\utothat$ well but does not generalize -- the test FID exceeds $10^3$.
As the dataset size increases ($1000 \leq \# \text{samples} \leq 3000$), the approximation $\utheta$ becomes less accurate. Despite this, the model achieves lower FID scores on the test set but still memorizes the training data.
The rightmost plot of \Cref{fig_dist_ustar} illustrates this memorization by showing the average distance between each generated sample and its nearest neighbor in the training set.
For larger datasets ($\# \text{samples} \geq 3000$), this distance increases substantially, indicating that the model generalizes better.
Overall, \Cref{fig_dist_ustar} also suggests that the FID metric can be misleading, even when computed on the test set. For example, the model trained with 1000 samples has a low test FID but memorizes training examples.

% In \Cref{sub:phasesapprox}, we
\Cref{fig_dist_ustar} confirms that generalization arises when the network $\utheta$ fails to estimate the optimal velocity field $\utothat$, and that this failure occurs at two specific time intervals.
In \Cref{sub:gentimes}, we investigate which of these two intervals is responsible for driving generalization.

%%%%%%%%%%%%%%%%%%%%%%%%%%%%%%%%%%%%%%%%%%%%%%%%%%%%%%%%%
\subsection{When does generalization arise?}
\label{sub:gentimes}
%%%%%%%%%%%%%%%%%%%%%%%%%%%%%%%%%%%%%%%%%%%%%%%%%%%%%%%%%
% As evidenced in \Cref{sub:phasesapprox}, the approximation of $\utothat$ by $u_\theta$ fails mostly in two regions: for small and large values of $t$.
% Samples generated using the
% Since the optimal velocity field $\utothat$ can only generate samples from the training set (see \Cref{ex_closed_form_gaussian} in \Cref{sec:cfmrecalls}), \emph{generation must lie in the failure to approximate  of at least one of these two time intervals}.
To investigate whether the failure to approximate $\utothat$ matters the most at small or large values of $t$, we carry out the following experiment.

\begin{figure}[t]
    \begin{center}
        \includegraphics[width=0.48\linewidth]{../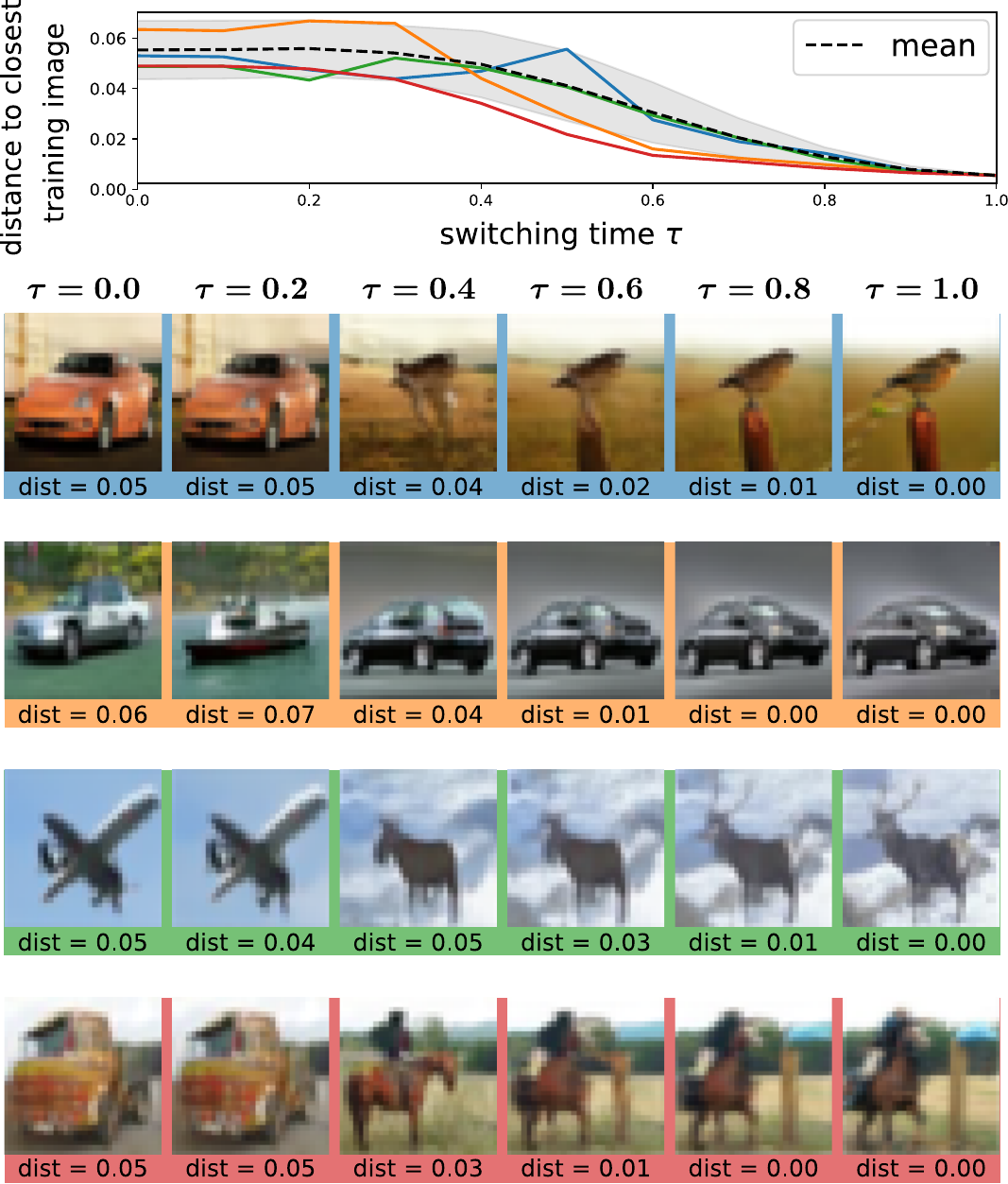}
        \includegraphics[width=0.48\linewidth]{../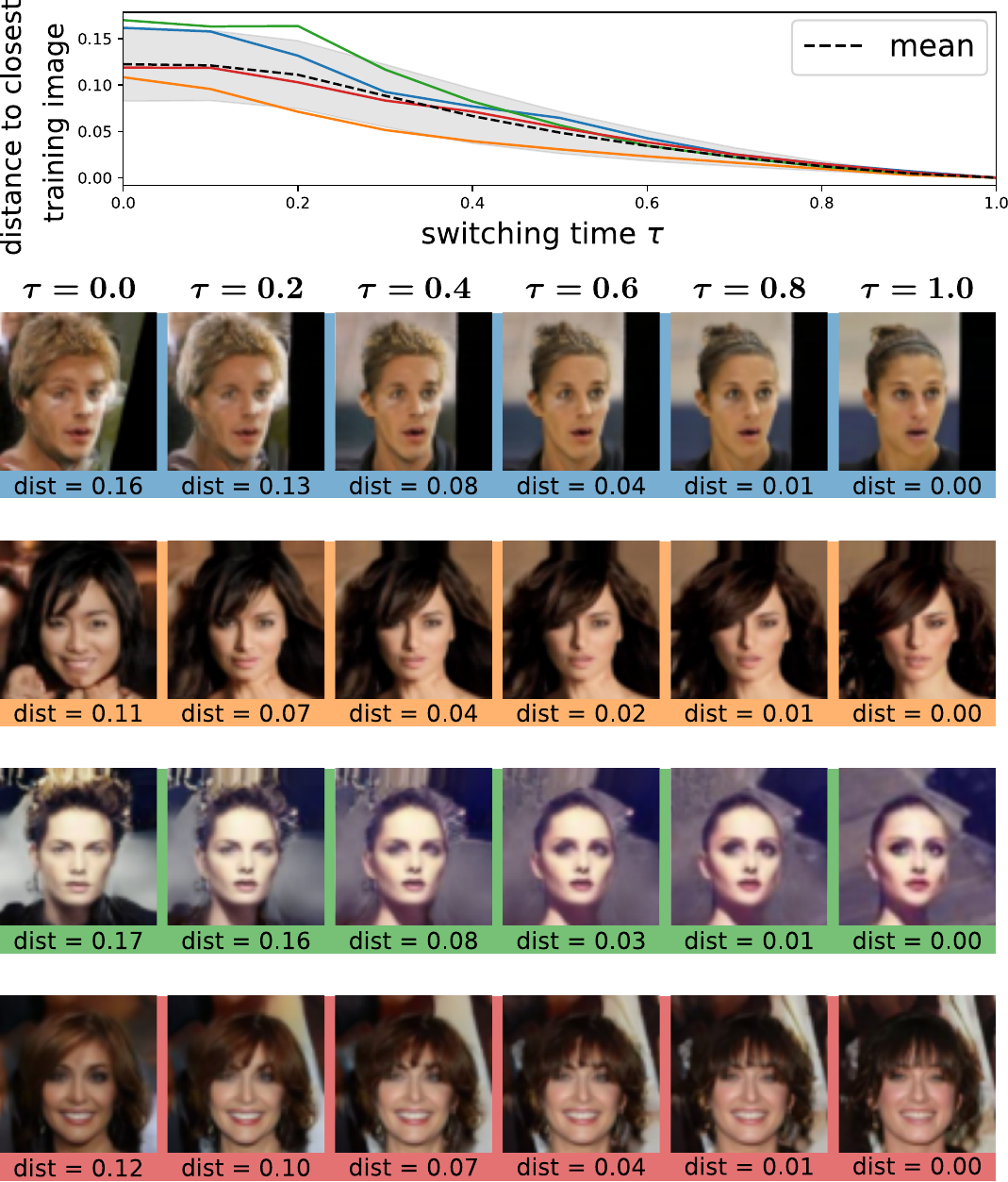}
    \end{center}
    \caption{\textbf{Generalization occurs at small times on CIFAR-10 (left) and CelebA $64$ (right)}. \emph{Top}: Generalization (distance between generated samples and training data) of hybrid models that follow $\utothat$ on $[0, \tau]$, then $u_\theta$ on $[\tau, 1]$. The four colored curves correspond to four specific $x_0$, the black dashed curve is the mean distance over the 256 generated images.
    \emph{Bottom}: visualization of generated images for the four different starting noises and various values of $\tau$ (the background color matching the curve in the top figure).
    \emph{Following $\utothat$ until $\tau \geq 0.3$ yields a model that is not able to generalize}.}
    \label{fig:umix}
\end{figure}

\xhdr{Set up of \Cref{fig:umix}}
\looseness-1
We first learn a velocity field $u_\theta$ using standard conditional flow matching (see \Cref{app:expes_details}), then we construct a hybrid model: we define a piecewise trajectory where the flow is governed by the optimal velocity field $\utothat$ for times $t \in [0, \tau]$, and by the learned velocity field $u_\theta$ for times $t \in [\tau, 1]$, for a given threshold parameter $\tau \in [0, 1]$.
For the extreme case $\tau=1$, the full trajectory follows $\utothat$, and samples exactly match training data points.
Conversely, when $\tau = 0$, the entire trajectory is governed by $u_\theta$, yielding novel samples.
Intermediate values of $\tau$ produce a mixture of both behaviors, which we interpret as reflecting varying degrees of generalization.
%
% We take a trained velocity field $u_\theta$ and choose a time threshold $\tau \in [0, 1]$.
% Then, we generate samples with a hybrid model: on $[0, \tau]$ ($\tau \in [0, 1]$) we solve the ordinary differential equation by following $\utothat$, but for $[\tau, 1]$ we switch to following $\utheta$.
% If $\tau = 1$, the generated samples exactly match some images in the dataset while for $\tau =0$ we fully follow $\utheta$ and generate new images.
% In the middle, we expect intermediate behavior, for which we need to assess some notion of creativity.
To assess generalization, we measure the distance of generated samples to the dataset using the LPIPS metric \citep{zhang2018unreasonable}, which computes the feature distance between two images via some pretrained classification network.
We define the distance of a generated sample $x$ to a dataset $\mathcal D = \{\data{1}, \ldots, \data{n} \}$ as $\mathrm{dist}(x, \mathcal D) = \min_{x^{(i)} \in \mathcal{D}} \mathrm{LPIPS}(x, x^{(i)})$.
We fix a random batch of 256 pure noise images from $p_0$.
Then, for various threshold values $\tau$, we generate 256 images with the hybrid model, always starting from this batch.
Finally, we measure the creativity of the hybrid model as the mean of the aforementioned LPIPS distances between the 256 generated samples and the dataset.

\xhdr{Comments on \Cref{fig:umix}}
The top row displays the LPIPS distances as $\tau$ varies, on the CIFAR-10 (left) and CelebA - $64 \times 64$ (right) datasets.
For $\tau \leq 0.2$, the hybrid model remains as creative as $u_\theta$, despite following $\utothat$ in the first steps.
For $\tau > 0.2$, the LPIPS distance starts dropping.
On the displayed generated samples (bottom rows), we in fact see that as soon as $\tau \geq 0.4$, the sample generated by the hybrid model is almost the same as the one obtained with $\utothat$ $(\tau=1)$.
This means
% \begin{center}
% \looseness-1
\emph{the final image is already determined at $t=0.4$}, and despite the generalization capacity of the learned velocity field $\utheta$, following it only after $t \geq 0.4$ is not enough to create a new image: \emph{generalization occurs early and cannot fully be explained by the failure to correctly approximate $\utot$ at large $t$.}
% \end{center}

% \mm{In the current state what feels weird is that we say: $u_\theta$ works because it fails to approximate $\utot$ in the early times, hence: we're going to regress against $\utot$ in early times, which feels contradictory.}
Although we have shown that the stochastic phase was limited to small values of $t$ in real-data settings, we have not yet definitively ruled it out as the cause of generalization. In the following \Cref{sec:learning}, we introduce a learning procedure designed to address this question directly.

% !TEX root = ../main.tex

%%%%%%%%%%%%%%%%%%%%%%%%%%%%%%%%%%%%%%%%%%%%%%%%%%%%%%%
\section{Learning with the closed-form formula}
\label{sec:learning}

In this section, in order to discard the impact of stochastic target on the generalization,  we propose to directly regress against the closed-form formula in~\Cref{eq:closed_form}.

%%%%%%%%%%%%%%%%%%%%%%%%%%%%%%%%%%%%%%%%%%%%%%%%%%%%%%%
\subsection{Empirical flow matching}
%%%%%%%%%%%%%%%%%%%%%%%%%%%%%%%%%%%%%%%%%%%%%%%%%%%%%%%

Regressing against the closed-form $\utothat$, defined in \Cref{eq:closed_form}, at a point $(x_t, t)$  requires computing a weighted sum of the conditional velocity fields over \emph{all} the $n$ training points $\data{i}$.
For a dataset of $n$ samples of size $d$, and a batch of size $|\mathcal{B}|$, computing the weights of the exact closed-form formula $\utothat(x, t)$ of flow matching  requires
$\mathcal{O}(n \times |\mathcal{B}| \times d) $.
These computations are prohibitive since they must be performed for each batch.
One natural idea is to estimate the closed-form formula $\utothat$ (\Cref{eq:closed_form}), by a Monte Carlo approximation (\Cref{eq:l_efm_softmax}), using $M \leq n$ samples $\batch{1}, \dots, \batch{M}$:
\begin{empheq}[box=\fcolorbox{blue!40!black!10}{green!05}]{align}\label{eq:l_efm}
    %\min_\theta
    \mathcal{L}_{\mathrm{EFM}}(\theta)
    & =
    \mathbb{E}_{
        \substack{
        x_0 \sim p_0 \\
        x_1 \sim \pdatahat \\
        t \sim \mathcal{U}([0, 1]) \\
        % \batch{1} := x_1 \; ;\;
        \batch{2}, \dots, \batch{M} \sim \pdatahat
        }}
    \Vert u_\theta(x_t, t) -
    \utothatemp(x_t, t) \Vert^2
    \enspace,
\end{empheq}
with
$x_t = (1-t) x_0 + t x_1$,
$\batch{1} := x_1$, and
\begin{align}
    \utothatemp(x, t)
    & =
    \sum_{j=1}^{M}
    \lambda(x, t)
    \frac{\batch{j} - x}{1-t}
    % \cdot
    % \left[
    %     \softmax \left( \left(-\frac{\normin{x - t b^{(k ')} }^2}{2(1 -t)^2} \right)_{k ' = 1, \dots, M} \right)
    % \right]_k
    \enspace
    ,
    \enspace
    \lambda(x, t) = \softmax \left (
        \left ( -\frac{\Vert x - t \data{l}\Vert^2}{2(1 -t)^2}
        \right )_{l=1,\ldots, n} \right )
    \enspace.
    \label{eq:l_efm_softmax}
\end{align}

The formulation in \Cref{eq:l_efm} may appear naive at first glance. Still, it hinges on a crucial trick: the Monte Carlo estimate is computed using a batch that systematically includes the point $x_1$, that generated the current $x_t$.
If instead $\batch{1}$ were sampled independently from $\pdatahat$, this could introduce a sampling bias (see \citealt{Ryzhakov2024explicit}, \Cref{app_efm}, and the corresponding OpenReview comments\footnote{\url{https://openreview.net/forum?id=XYDMAckWMa}} for an in-depth discussion).
\Cref{app_prop_efm} shows that the estimate $\utothatemp$ is unbiased and has lower variance than the standard conditional flow matching target.
%%%%%%%%%%%%%%%%%%%%%%%%%%%%%%%%%%%%%%%%%%%%%%%%%%%%%%%%%%%%%%%%%%%%
\begin{restatable}{proposition}{propefm}
\label{app_prop_efm}
    We denote the conditional probability distribution $p(z = \data{i} \mid x, t)$ over $\{\data{i}\}_{i=1}^n$ by $\pdatahat(z \mid x, t)$.
    % We write the conditional probability  distribution $p(z=\data{i} | x, t)$ on $\{\data{i}\}_{i=1}^n$ by  $\pdatahat(z | x, t)$.
    % denote $p(z=\data{i} | x, t)$, meaning a probability distribution on $\{\data{i}\}_{i=1}^n$.
    % , with expression given by \Cref{eq_conditional_prob}.
    With no constraints on the learned velocity field $\utheta$,
    %%%%%%%%%%%%%%%%%%%%%%%%%%%%%%%%%%%%%%%%%%%%%%%%%%%%%%%%%%%%%%
    \begin{propenum}[label=\roman*)]
        \item \label{app_prop_minimizer} The minimizer of \Cref{eq:l_efm} writes, for all $(x, t)$
        \begin{align}
            \mathbb{E}_{
                \substack{
                    \batch{1} \sim \pdatahat(\cdot | x, t ) \\
                    \batch{2}, \dots, \batch{M} \sim \pdatahat }
                }
        \left[ \utothatemp(x, t) \right] \enspace.
        \end{align}
    %%%%%%%%%%%%%%%%%%%%%%%%%%%%%%%%%%%%%%%%%%%%%%%%%%%%%%%%%%%%%%
    \item \label{app_prop_unbiased}
        In addition, for all $(x, t)$,
        % the "biased-mini-batch" estimator is an unbiased estimator of the closed form $\utothat$, i.e.,
        the minimizer of \Cref{eq:l_efm} equals the optimal velocity field, i.e.,
    \begin{align}
        \mathbb{E}_{
            \substack{
                \batch{1} \sim \pdatahat(\cdot | x, t ) \\
                \batch{2}, \dots, \batch{M} \sim \pdatahat }
            }
        \left[ \utothatemp(x, t) \right]
        % & = \mathbb{E}_{z \sim \pdatahat(\cdot | x_t, t )} \ucond(x_t, z, t)
        = \utothat(x, t)
        \enspace.
    \end{align}
    %%%%%%%%%%%%%%%%%%%%%%%%%%%%%%%%%%%%%%%%%%%%%%%%%%%%%%%%%%%%%%
    \item \label{app_prop_variance} The conditional variance of the estimator $\utothatemp$ is smaller than the usual conditional variance:
    \begin{align}
        \Var_{
            \substack{
                \batch{1} \sim \pdatahat(\cdot | x, t ) \\
                \batch{2}, \dots, \batch{M} \sim \pdatahat }
            }
            \left[ \utothatemp(x, t) \right]
        \leq
        \Var_{\batch{1} \sim \pdatahat(\cdot | x, t )} \left[ \ucond(x, \batch{1}, t) \right].
    \end{align}
    %%%%%%%%%%%%%%%%%%%%%%%%%%%%%%%%%%%%%%%%%%%%%%%%%%%%%%%%%%%%%%
    %%%%%%%%%%%%%%%%%%%%%%%%%%%%%%%%%%%%%%%%%%%%%%%%%%%%%%%%%%%%%%
    \end{propenum}
\end{restatable}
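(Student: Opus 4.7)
The plan is to handle the three claims in sequence. For \emph{(i)}, I use the classical fact that an unconstrained $L^2$ regression is minimized pointwise by the conditional expectation of the target given the input. Under the sampling law of $\mathcal{L}_{\mathrm{EFM}}$, the endpoint $\batch{1} = x_1$ conditioned on $(x_t = x, t)$ is distributed according to $\pdatahat(\cdot \mid x, t)$---this is exactly the Bayesian inversion that underlies \eqref{eq_inversion_formula}---while $\batch{2}, \ldots, \batch{M}$ are sampled independently of $(x_0, x_1)$, hence independently of $x_t$. Combining these two facts identifies the pointwise minimizer at $(x, t)$ as the stated expectation of $\utothatemp(x, t)$.

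For \emph{(ii)}, the key observation is that $\utothatemp(x, t)$ is a \emph{symmetric} function of $(\batch{1}, \ldots, \batch{M})$, since the softmax weights and the $\ucond$ terms enter symmetrically in $j$. Let $\bar p(x, t) := \tfrac{1}{n}\sum_{i=1}^n p(x \mid z = \data{i}, t)$ denote the marginal density. The Radon--Nikodym derivative of the joint law of $(\batch{1}, \ldots, \batch{M})$ from \emph{(i)} with respect to $\pdatahat^{\otimes M}$ is $p(x \mid z = \batch{1}, t) / \bar p(x, t)$. Averaging this weight over all $M!$ permutations and using the symmetry of $\utothatemp$, the expectation rewrites as an i.i.d.\ expectation against the symmetrized weight $\tfrac{1}{M \bar p(x, t)} \sum_{k} p(x \mid z = \batch{k}, t)$. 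Expanding $\utothatemp$ and using exchangeability under $\pdatahat^{\otimes M}$ then collapses the whole expression to $\tfrac{1}{\bar p(x, t)} \mathbb{E}_{b \sim \pdatahat}[p(x \mid z = b, t)\, \ucond(x, b, t)]$, which by the definition of conditional probability equals $\utothat(x, t)$.

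For \emph{(iii)}, I invoke Rao--Blackwellization. Let $S$ denote the unordered multiset $\{\batch{1}, \ldots, \batch{M}\}$. The same symmetrization as above shows that, conditionally on $S$ and $(x, t)$, the identity of $\batch{1}$ within $S$ is distributed with probabilities $p(x \mid z = \batch{j}, t) / \sum_k p(x \mid z = \batch{k}, t)$. Hence $\mathbb{E}[\ucond(x, \batch{1}, t) \mid S, x, t] = \utothatemp(x, t)$, so $\utothatemp$ is the Rao--Blackwellization of $\ucond(x, \batch{1}, t)$ given $S$. The Pythagorean identity for vector-valued $L^2$ projections (equivalently, the law of total variance) then yields the claimed variance inequality.

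I expect the main obstacle to be the symmetrization step in \emph{(ii)}: once one recognizes $\utothatemp$ as symmetric and computes the symmetrized change-of-measure factor, the rest is algebra. The cleverness of the sampling design---forcing $\batch{1} = x_1$ rather than drawing it independently---is exactly what makes the Radon--Nikodym weight collapse nicely under symmetrization, and this same symmetrization also delivers the conditional distribution of $\batch{1}$ given $S$ that is needed in \emph{(iii)}.
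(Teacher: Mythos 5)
Your items \emph{(i)} and \emph{(ii)} follow essentially the same route as the paper: \emph{(i)} is the standard pointwise $L^2$ projection argument, and your \emph{(ii)} is the paper's exchangeability calculation recast in change-of-measure language — your Radon--Nikodym weight $p(x\mid z=\batch{1},t)/\bar p(x,t)$ is exactly the paper's factor $n\,w^{(\bm{i}_1)}$, and averaging it over the $M$ slots so that $\sum_k w^{(\bm{i}_k)}$ cancels the denominator of $\utothatemp$ is the same collapse. Where you genuinely diverge is item \emph{(iii)}. The paper expands $\mathbb{E}\bigl[\|\utothatemp\|^2\bigr]$, applies Jensen's inequality to the inner weighted average, and simplifies by hand. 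You instead identify $\utothatemp(x,t)$ as the conditional expectation of the raw CFM target $\ucond(x,\batch{1},t)$ given the unordered mini-batch $S$ — i.e., a Rao--Blackwellization — and invoke the law of total variance. I checked the conditional-law claim (including the case of repeated points in $S$): the conditional probability that $\batch{1}$ occupies slot $j$ is indeed $p(x\mid z=\batch{j},t)/\sum_k p(x\mid z=\batch{k},t)$, so $\mathbb{E}[\ucond(x,\batch{1},t)\mid S]=\utothatemp(x,t)$ holds, and the variance inequality follows. Your route is arguably cleaner and more conceptual: it places $\utothatemp$ squarely in the sufficiency/Rao--Blackwell framework, and once the conditional-expectation identity is established, item \emph{(ii)} is an immediate consequence of the tower property — a shortcut you note at the end but do not actually exploit, instead reproving \emph{(ii)} via the symmetrization computation. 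Either argument for \emph{(iii)} is valid; yours makes the variance reduction structural rather than computational.
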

The proof of \Cref{app_prop_efm} is provided in \Cref{app_sub_proof_efm}.
The estimator $\utothatemp$ of the optimal field $\utothat$ is closely related to self-normalized importance sampling (see \Cref{app_sub_theoretical_res} and \citealt[Chap. 9.2]{Owen2013montecarlobook}), as well as to Rao-Blackwellized estimators \citep{casella1996raoblaclwell,cardoso2022biasreduced}.
As discussed in \citet{Ryzhakov2024explicit}, self-normalized importance sampling estimators of $\utothat$ are generally biased, in the sense that:
\begin{math}
    \mathbb{E}_{\batch{1}, \dots, \batch{M} \sim \pdatahat} \utothatemp(x_t, t) \neq \utothat(x_t, t)
    \enspace.
\end{math}
A key insight is that our estimator includes
$\batch{1} \sim \pdatahat(\cdot \mid x_t, t)$,
which leads to the main result of \Cref{app_prop_efm}.
In \Cref{sub:expeslargescale}, we demonstrate that \Cref{alg:efm}, designed to solve \Cref{eq:l_efm}, yields consistent improvements on high-dimensional datasets such as CIFAR-10 and CelebA.
Additional details on the unbiasedness of $\mathcal{L}_{\mathrm{EFM}}$ can be found in the supplementary material (\Cref{app_efm}).
From a computational perspective, despite requiring $M$ additional samples, \Cref{alg:efm} remains significantly more efficient than increasing the batch size by a factor of $M$: the $M$ samples are merely averaged (with weights), while the backpropagation remains identical to that of \Cref{alg:cfm}.

\begin{minipage}[t]{0.41\textwidth}
% \begin{minipage}[t]{0.48\textwidth}
    \begin{algorithm}[H]
    \caption{Vanilla Flow Matching}
    \label{alg:cfm}
    \SetKwInOut{Input}{input}
    \SetKwInOut{Init}{init}
    \SetKwInOut{Parameter}{param}
    % \Init{$s_0, Q$}
    % \Parameter{$B$ }
    \For{$k$ in $1, \dots, n_{\mathrm{iter}}$ }
    {
    $t \sim \mathcal{U}([0, 1]) $

    $x_0 \sim \mathcal{N}(0, \Id)$,
    $x_1 \sim \pdatahat$,

    $x_t = (1 - t) x_0 + t x_1$

    $\textcolor{blue}{\ucond(x_t, t) = \dfrac{x_1 - x_t}{1 - t} = x_1 - x_0}$

    $\mathcal{L}(\theta) = \norm{\utheta(x_t, t) - \textcolor{blue}{\ucond(x_t, t)}}^2$

    Compute $\nabla \mathcal{L}(\theta)$ and update $\theta$

    }
\Return{$\utheta$}
\end{algorithm}
\end{minipage}
\hfill
\begin{minipage}[t]{0.60\textwidth}
% \begin{minipage}[t]{0.50\textwidth}
    \begin{algorithm}[H]
    \caption{Empirical Flow Matching}
    \label{alg:efm}
    \SetKwInOut{Input}{input}
    \SetKwInOut{Init}{init}
    \SetKwInOut{Parameter}{param}
    % \Init{$s_0, Q$}
    \Parameter{$M$ \textcolor{blue}{// Number of samples in the empirical mean}}
    \For{$k$ in $1, \dots, n_{\mathrm{iter}}$ }
    {

    $x_0 \sim \mathcal{N}(0, \Id)$,
    $x_1 \sim \pdatahat$,
    $t \sim \mathcal{U}([0, 1]) $

    $x_t = (1 - t) x_0 + t x_1$

    $b^{(1)} = x_1$

    $\forall j\!\in\! \llbracket 2{,}M\rrbracket$, $b^{(j)} \sim \pdatahat$
    \tcp*[l]{Samples from $\pdatahat$}

    $\textcolor{blue}{\utothat_M(x_t, t)
    =
    % \overset{M}{\substack{\sum \\j=1 }}
    \sum_{j=1}^M
    % \overset{M}{\substack{\sum \\j=1 }}
    \frac{\batch{j} - x_t}{1-t}
    \cdot
    \left[
    \softmax \left (-\frac{\normin{x_t - t \cdot b }^2}{2(1 -t)^2} \right )
    \right]_j}$

    $\mathcal{L}(\theta) = \norm{\utheta(x_t, t) - \textcolor{blue}{\utothat_M(x_t, t)} }^2$

    Compute $\nabla \mathcal{L}(\theta)$ and update $\theta$

    }
\Return{$\utheta$}
\end{algorithm}

\end{minipage}

%%%%%%%%%%%%%%%%%%%%%%%%%%%%%%%%%%%%%%%%%%%%%%%%%%%%%%%
\subsection{Experiments}
\label{sub:expeslargescale}
%%%%%%%%%%%%%%%%%%%%%%%%%%%%%%%%%%%%%%%%%%%%%%%%%%%%%%%

We now learn with empirical flow matching (EFM, \Cref{eq:l_efm} and \Cref{alg:efm}) in practical high-dimensional settings.
Our goal with this empirical investigation is first to observe if regressing against a more deterministic target leads to performance improvement/degradation.

\xhdr{Datasets and Models}
We perform experiments on the image datasets CIFAR-10 \citep{Krizhevsky2009} and CelebA $64 \times 64$ \citep{celeba}.
For the experiments, we compare vanilla conditional flow matching \citep{lipman2023flow,liu2023rectifiedflow,albergo2023stochasticinterpolant}, optimal transport flow matching \citep{pooladian2023multisample,tong2023improving}, and the empirical flow matching in \Cref{alg:efm}, for multiple numbers of samples $M$ to estimate the empirical mean.
Training details are in \Cref{app:expes_details}.
%For CIFAR-10 we rely on the standard U-Net for diffusion architecture %\citep{nichol2021improved} of the \texttt{torchcfm} codebase (\url{https://github.com/atong01/conditional-flow-matching}), with default parameters.
%For CelebA $64 \times 64$, we relied on the backbone of the \texttt{PnPflow} codebase (\citealt{martin2024pnpflow}, \url{https://github.com/annegnx/PnP-Flow}).
% Experiments were done on a cluster of RTX TitanXL GPUs managed by Slurm \citep{slurm}.

\xhdr{Metrics}
To assess generalization performance, we use the standard Fréchet Inception Distance \citep{heusel2017gans} with Inception-V3 \citep{szegedy2016rethinking} but we also follow the recommendation of \citet{stein2023evalgenmodels} using the DINOv2 embedding \citep{oquab2023dinov2}, which is known to a more expressive and discriminative embedding, that leads to a less biased evaluation.
We also measure the FID between the generated and the train and test sets, rather than only on the training set, as is often done in generative modeling benchmarks.
The train FID is computed between $50$k generated images and the $50$k images from the training set. The test FID is computed between the same $50$k generated images and the $10$k images from the test set.
On \Cref{fig_dist_ustar}, we also displayed a memorization metric that would detect a pure copy of the training set.
Overall, defining and quantifying the generalization ability of generative models is overall a challenging task: train and test FID are known to be imperfect \citep{stein2023evalgenmodels,Jiralerspong2023fld,parmar2021cleanfid}, yet no superior competitor has emerged.
%%%%%%%%%% TO KEEP IMPORTANT MEMORIZATION
% In this work, by ``memorization'' we mean generating a majority of images very close to training samples (which is different from, in a privacy sense, the ability to recover the samples used for a trained model as in \eg \citet{Carlini2023}).
% We oppose it to generalization, \ie generating realistic samples that are not in the training data.
%%%%%%%%%% TO KEEP IMPORTANT
%

\begin{figure}[tb]
    % \vspace{-1cm}
    \begin{center}
        \includegraphics[width=1\linewidth]{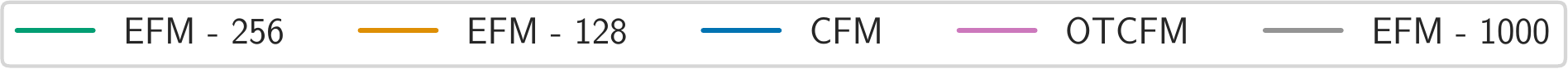}
        \includegraphics[width=1\linewidth]{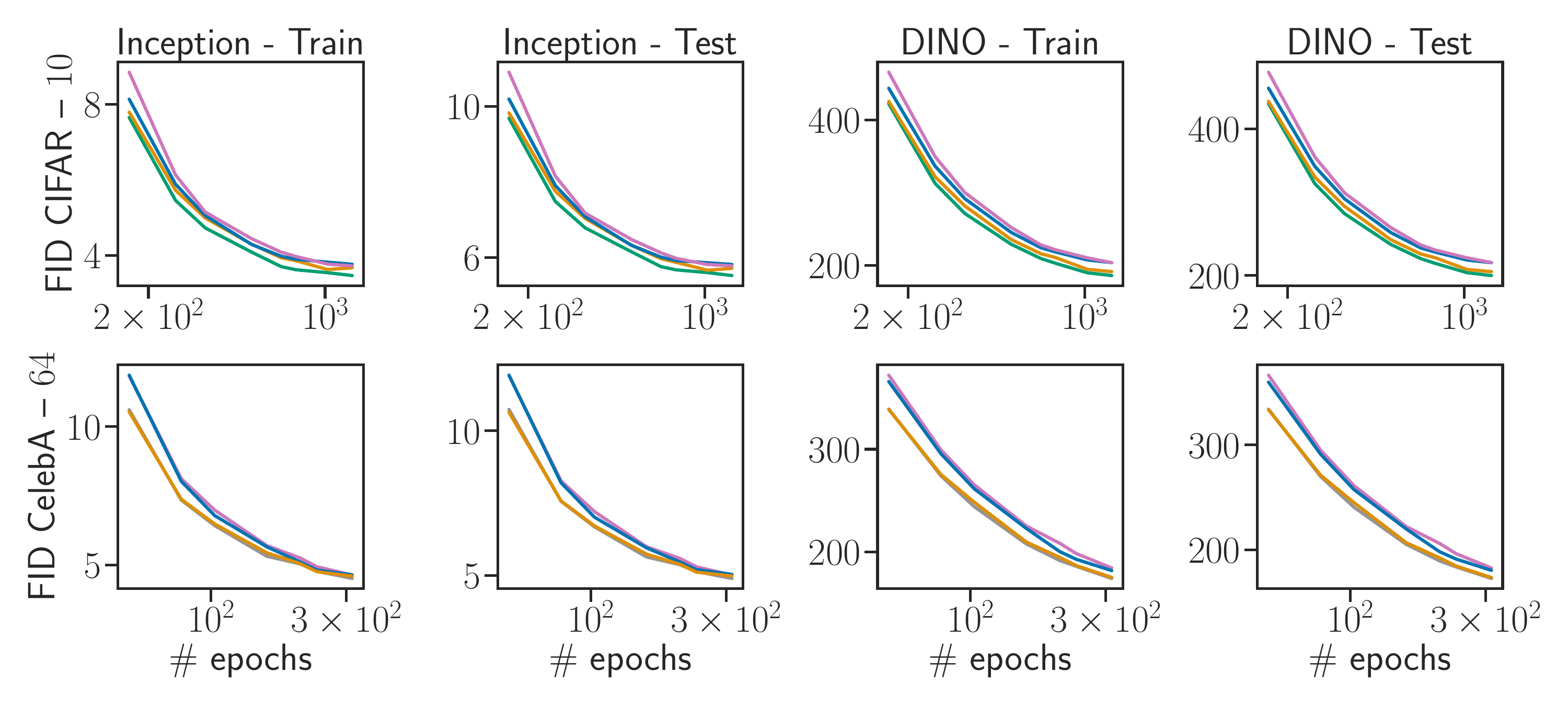}
    \end{center}
    \caption{FID computed on the training set ($50$k) and the test set ($10$k) using multiple embeddings, Inception and DINOv2. Regressing against a more deterministic target (EFM - $128$, $256$, $1000$) does not yield performance decreases. On the contrary, the more deterministic the target, the better the performance.}
    \label{fig_fid_cifar}
\end{figure}

\xhdr{Comments on \Cref{fig_fid_cifar}}
\Cref{fig_fid_cifar} compares vanilla flow matching, OTCFM, and the empirical flow matching (EFM, \Cref{alg:efm}) approaches using various numbers of samples to estimate the empirical mean, $M \in \{128, 256, 1000\}$.
First, we observe that learning with a more deterministic target does not degrade either training or testing performance, across both types of embeddings.
On the contrary, we consistently observe modest but steady improvements as stochasticity is reduced. For both CIFAR-10 and CelebA, increasing the number of samples $M$ used to compute the empirical mean—\ie, making the targets less stochastic—leads to more stable improvements.
It is worth noting that \Cref{alg:efm} has a computational complexity of $\mathcal{O}(M \times |\mathcal{B}| \times d)$, where $|\mathcal{B}|$ is the batch size, $M$ is the number of samples used to estimate the empirical mean, and $d$ is the sample dimension. In our experiments, choosing $M= |\mathcal{B}| =128$ yielded a modest time overhead.
% For empirical flow matching, we tried multiple values to go beyond $M = 1000$, \eg $M=2000$, $M=5000$: and the results were so similar to $M=1000$ that curves were indistinguishable from empirical flow matching with $M=1000$, hence we chose not to report results for $M \geq 1000$.
For empirical flow matching, we experimented with several values beyond $M = 1000$ (\eg $M = 2000$, $M = 5000$). The results were nearly identical to those obtained with $M = 1000$, with curves being visually indistinguishable. Therefore, we chose not to report results for $M \geq 1000$.

% \Cref{fig_fid_cifar} compares vanilla flow matching, OTCFM, and the empirical approximation of flow matching (EFM, \Cref{alg:efm}) with multiple sample number values to estimate the empirical mean, $M \in \{ 128, 256, 1000 \}$.
% First, one can observe that learning against a more deterministic target does not degrade the train or test performance, in either of the embeddings.
% On the contrary, we do observe modest but steady improvement when reducing stochasticity. Both for CIFAR-10 and CelebA, the larger the number of samples $M$ to estimate the mean (\ie, the less stochastic the targets), the steadier the improvements.
% One has to keep in mind that \Cref{alg:efm} has a complexity $\mathcal{O}(n \times |\mathcal{B}| \times M$, for a batch size $|\mathcal{B}|$, $M$ samples to estimate the empirical mean of a dataset of $n$ samples.

% !TEX root = ../main.tex

%%%%%%%%%%%%%%%%%%%%%%%%%%%%%%%%%%%%%%%%%%%%%%%%%%%%%%%%%%%%%%%%%%%%%%%%%%%%%%%%%%
\section{Related work}
\label{sec:relatedwork}
%%%%%%%%%%%%%%%%%%%%%%%%%%%%%%%%%%%%%%%%%%%%%%%%%%%%%%%%%%%%%%%%%%%%%%%%%%%%%%%%%%

The existing literature related to our study can be roughly divided into three approaches: leveraging the closed-form, studies on the memorization vs generalization, and characterization of the different phases of the generating dynamics.

\xhdr{Leveraging the closed-form} \Cref{prop_closed_form_velocity} has been leveraged in several ways.
The closest existing work is by \citet{Ryzhakov2024explicit}, who propose to regress against $\utothat$ as we do in \Cref{sec:learning}.
Nevertheless, their motivation is that reducing the variance of the velocity field estimation makes learning more accurate: as explained in \Cref{sub:target_sto}, we argue this claim rests on misleading $2$D-based intuitions (\eg Figure 1, challenged by \Cref{sub:target_sto}).
The idea of regressing against a more deterministic target (as \Cref{app_prop_efm} shows) derived from the optimal closed-form velocity field has also been empirically explored for diffusion models \citep{xu2023stabletargetfield}.
\citet{Scarvelis2023} bypass training, and suggest using a smoothed version of $\utothat$ to generate novel samples.
In a work specific to images and convolutional neural networks, \citet{kamb2024analytic} suggested that flow matching indeed ends up learning an optimal velocity, but that instead of memorizing training samples, the velocity memorizes a combination of all possible patches in an image and across the images.
They show remarkable agreement between their theory and the trajectories followed by learned vector fields, but their work is limited to convolutional architectures, and was recently extended to a larger class of architectures \citep{lukoianov2025locality}.

\xhdr{Memorization and reasons for generalization} \citet{kadkhodaie2024generalization} directly relates the transition from memorization to generalization to the size of the training dataset, and proposes a geometric interpretation. We provide a complementary experiment in \Cref{sub:phasesapprox}, quantifying how much the network fails to estimate the optimal velocity field.
\citet{Gu2025memorizationdiffusion} provide a detailed experimental investigation into the potential causes for generalization, primarily based on the characteristics of the dataset and choices for training and model.
\citet{Vastola2025generalization} explores different factors of generalization in the case of diffusion, with a special focus on the stochasticity of the target objective in the learning problem.
Through a physic-based modeling of the generative dynamics, they study the covariance matrix of the noisy estimation of the exact score.
In our work, we believe that we have shown that this claim was not valid for real high-dimensional data.
\citet{niedoba2024towards} study the poor approximation of the exact score by the learned models: like \citet{kamb2024analytic}, they suggest that the generalization of the learned models comes from memorization of many patches in the training data.

\xhdr{Temporal regimes} \citet{Biroli2024,sclocchi2025phase} provide an analysis of the exact score, the counterpart of the exact velocity field for diffusion.
For a multimodal target distribution, the authors identify three phases (we keep the convention that $t=0$ is noise and $t=1$ is target): for $t < t_1$, all trajectories are indistinguishable; for $t_1 < t < t_2$, trajectories converging to different modes separate; for $t > t_2$, trajectories all point to the training dataset.
In the case of Gaussians mixtures target, they highlight the dependency of $t_2$ in the dimension and the number of samples, in $\mathcal{O} \left( ({\log n}) /{d} \right)$, meaning that the first phases are observable only if the number of training points is exponential in the dimension. The methodology they adopt to validate the existence of such $t_2$ on real data relies on the stochasticity of the backward generative process, which does not hold in the case of flow matching.
Our experiments on \emph{learned} flow matching models allow us to take this theoretical study on memorization and temporal behaviors of generative processes a step further.
\section{Conclusion, limitations and broader impact}
%%%%%%%%%%%%%%%%%%%%%%%%%%%%%%%%%%%%%%%%%%%%%%%%%%%%%%%%%%%%%%%%%%%%%%%%%%%%%%%%
%%%%%%%%%%%%%%%%%%%%%%%%%%%%%%%%%%%%%%%%%%%%%%%%%%%%%%%%%%%%%%%%%%%%%%%%%%%%%%%%
\looseness-1
\xhdr{Conclusion}
By challenging the assumption that stochasticity in the loss function is a key driver of generalization, our findings help clarify the role of approximation of the exact velocity field in flow matching models.
Beyond the different temporal phases in the generation process that we have identified, we expect further results to be obtained by uncovering new properties of the true velocity field.

\looseness-1
\xhdr{Limitation} Our work is mainly empirical, with a focus on \emph{learned} models, but did not precisely characterize the learned velocity field, in particular, how it behaves outside the trajectories defined by the optimal velocity. Leveraging existing work on the inductive biases of the architectures at hand seems like a promising venue.
% Another promising area that w
Another limitation is that we did not investigate the interaction between the architectural inductive bias, and optimization procedures: this is a very challenging, but active area of research \citep{boursier2025simplicity,bonnaire2025diffusion,favero2025bigger}.
% Though our work connects with many scattered theoretical results in the literature, our contribution is mainly empirical with a focus on learned models used in practice: building on our experimental findings to develop new theoretical foundations is a next step for a broader understanding of generation.

\xhdr{Broader impact} We hope that identifying the key factors of generalization will lead to improved training efficiency.
However, generative models also raise concerns related to misinformation (notably deepfakes), data privacy, and potential misuse in generating synthetic but realistic content.

\newpage
% !TEX root = ../main.tex
%%%%%%%%%%%%%%%%%%%%%%%%%%%%%%%%%%%%%%%%%%%%%%%%%%%%%%%%%%%%%%%%%%%%%%%%%%%%%%%%
%%%%%%%%%%%%%%%%%%%%%%%%%%%%%%%%%%%%%%%%%%%%%%%%%%%%%%%%%%%%%%%%%%%%%%%%%%%%%%%%
\section{Acknowledgments}
%%%%%%%%%%%%%%%%%%%%%%%%%%%%%%%%%%%%%%%%%%%%%%%%%%%%%%%%%%%%%%%%%%%%%%%%%%%%%%%%
%%%%%%%%%%%%%%%%%%%%%%%%%%%%%%%%%%%%%%%%%%%%%%%%%%%%%%%%%%%%%%%%%%%%%%%%%%%%%%%%
The authors thank the Blaise Pascal Center for its computational support, using the SIDUS \citep{cbp} solution.

% \clearpage
\bibliographystyle{abbrvnat}
\bibliography{references.bib}

@article{favero2025bigger,
  title={Bigger Isn't Always Memorizing: Early Stopping Overparameterized Diffusion Models},
  author={A. Favero and A. Sclocchi and M. Wyart},
  journal={NeurIPS},
  year={2025}
}

@article{lukoianov2025locality,
  title={Locality in image diffusion models emerges from data statistics},
  author={A. Lukoianov and C. Yuan and J. Solomon and V. Sitzmann},
  journal={NeurIPS},
  year={2025}
}

@article{bonnaire2025diffusion,
  title={Why Diffusion Models Don't Memorize: The Role of Implicit Dynamical Regularization in Training},
  author={Bonnaire, Tony and Urfin, Rapha{\"e}l and Biroli, Giulio and M{\'e}zard, Marc},
  journal={NeurIPS},
  year={2025}
}

@article{boursier2025simplicity,
  title={Simplicity bias and optimization threshold in two-layer {ReLu} networks},
  author={E. Boursier and N. Flammarion},
  journal={ICML},
  year={2025}
}

@article{sclocchi2025phase,
  title={A phase transition in diffusion models reveals the hierarchical nature of data},
  author={A. Sclocchi and A. Favero and M. Wyart},
  journal={PNAS},
  volume={122},
  number={1},
  pages={e2408799121},
  year={2025},
  publisher={National Academy of Sciences}
}

@software{Howard_Imagenette_2019,
    title={Imagenette: A smaller subset of 10 easily classified classes from Imagenet},
    author={Jeremy Howard},
    year={2019},
    month={March},
    publisher = {GitHub},
    url = {https://github.com/fastai/imagenette}
}

@article{cbp,
  title={SIDUS—the solution for extreme deduplication of an operating system},
  author={Quemener, Emmanuel and Corvellec, Marianne},
  journal={Linux Journal},
  volume={2013},
  number={235},
  pages={3},
  year={2013},
  publisher={Belltown Media Houston, TX}
}

@article{cardoso2022biasreduced,
  title={Br-snis: bias reduced self-normalized importance sampling},
  author={Cardoso, Gabriel and Samsonov, Sergey and Thin, Achille and Moulines, Eric and Olsson, Jimmy},
  journal={NeurIPS},
  volume={35},
  pages={716--729},
  year={2022}
}

@article{casella1996raoblaclwell,
  title={Rao-Blackwellisation of sampling schemes},
  author={Casella, George and Robert, Christian P},
  journal={Biometrika},
  volume={83},
  number={1},
  pages={81--94},
  year={1996},
  publisher={Oxford University Press}
}

@book{Owen2013montecarlobook,
   author = {Art B. Owen},
   year = 2013,
   title = {Monte Carlo theory, methods and examples},
   publisher = {\url{https://artowen.su.domains/mc/}}
}

@book{robert1999montecarlobook,
  title={Monte Carlo statistical methods},
  author={Robert, Christian P and Casella, George and Casella, George},
  volume={2},
  year={1999},
  publisher={Springer}
}

@article{xu2023stabletargetfield,
  title={Stable target field for reduced variance score estimation in diffusion models},
  author={Xu, Yilun and Tong, Shangyuan and Jaakkola, Tommi},
  journal={ICLR},
  year={2023}
}

@article{dinov2,
  title={Dinov2: Learning robust visual features without supervision},
  author={Oquab, Maxime and Darcet, Timothée and Moutakanni, Theo and Vo, Huy V. and Szafraniec, Marc and Khalidov, Vasil and Fernandez, Pierre and Haziza, Daniel and Massa, Francisco and El-Nouby, Alaaeldin and Howes, Russell and Huang, Po-Yao and Xu, Hu and Sharma, Vasu and Li, Shang-Wen and Galuba, Wojciech and Rabbat, Mike and Assran, Mido and Ballas, Nicolas and Synnaeve, Gabriel and Misra, Ishan and Jegou, Herve and Mairal, Julien and Labatut, Patrick and Joulin, Armand and Bojanowski, Piotr},
  journal={TMLR},
  year={2024}
}

@inproceedings{nichol2021improved,
  title={Improved denoising diffusion probabilistic models},
  author={Nichol, Alexander Quinn and Dhariwal, Prafulla},
  booktitle={ICML},
  pages={8162--8171},
  year={2021},
  organization={PMLR}
}

@inproceedings{celeba,
  title = {Deep Learning Face Attributes in the Wild},
  author = {Liu, Ziwei and Luo, Ping and Wang, Xiaogang and Tang, Xiaoou},
  booktitle = {Proceedings of International Conference on Computer Vision (ICCV)},
  month = {December},
  year = {2015}
}

@inproceedings{zhang2024emergencerepro,
  title={The emergence of reproducibility and consistency in diffusion models},
  author={Zhang, Huijie and Zhou, Jinfan and Lu, Yifu and Guo, Minzhe and Wang, Peng and Shen, Liyue and Qu, Qing},
  booktitle={ICML},
  year={2024}
}

@inproceedings{Dar2023investigating,
  title={Investigating data memorization in 3d latent diffusion models for medical image synthesis},
  author={Dar, Salman Ul Hassan and Ghanaat, Arman and Kahmann, Jannik and Ayx, Isabelle and Papavassiliu, Theano and Schoenberg, Stefan O and Engelhardt, Sandy},
  booktitle={International Conference on Medical Image Computing and Computer-Assisted Intervention},
  pages={56--65},
  year={2023},
  organization={Springer}
}

@article{Somepalli2023understanding,
  title={Understanding and mitigating copying in diffusion models},
  author={Somepalli, Gowthami and Singla, Vasu and Goldblum, Micah and Geiping, Jonas and Goldstein, Tom},
  journal={NeurIPS},
  volume={36},
  pages={47783--47803},
  year={2023}
}

@article{li2024goodscore,
  title={A good score does not lead to a good generative model},
  author={Li, Sixu and Chen, Shi and Li, Qin},
  journal={arXiv preprint arXiv:2401.04856},
  year={2024}
}

@inproceedings{yoon2023diffusion,
  title={Diffusion probabilistic models generalize when they fail to memorize},
  author={Yoon, TaeHo and Choi, Joo Young and Kwon, Sehyun and Ryu, Ernest K},
  booktitle={ICML 2023 workshop on structured probabilistic inference \& generative modeling},
  year={2023}
}

@article{niedoba2024towards,
  title={Towards a Mechanistic Explanation of Diffusion Model Generalization},
  author={Niedoba, Matthew and Zwartsenberg, Berend and Murphy, Kevin and Wood, Frank},
  journal={ICML},
  year={2025}
}

@inproceedings{somepalli2023diffusion,
  title={Diffusion art or digital forgery? investigating data replication in diffusion models},
  author={Somepalli, Gowthami and Singla, Vasu and Goldblum, Micah and Geiping, Jonas and Goldstein, Tom},
  booktitle={Proceedings of the IEEE/CVF conference on computer vision and pattern recognition},
  pages={6048--6058},
  year={2023}
}

@inproceedings{Carlini2023,
  title={Extracting training data from diffusion models},
  author={Carlini, Nicolas and Hayes, Jamie and Nasr, Milad and Jagielski, Matthew and Sehwag, Vikash and Tramer, Florian and Balle, Borja and Ippolito, Daphne and Wallace, Eric},
  booktitle={32nd USENIX Security Symposium (USENIX Security 23)},
  pages={5253--5270},
  year={2023}
}

@article{Gu2025memorizationdiffusion,
  title={On memorization in diffusion models},
  author={Gu, Xiangming and Du, Chao and Pang, Tianyu and Li, Chongxuan and Lin, Min and Wang, Ye},
  journal={TMLR},
  year={2025}
}

@article{Gong2022diffusiontext,
  title={Diffuseq: Sequence to sequence text generation with diffusion models},
  author={Gong, Shansan and Li, Mukai and Feng, Jiangtao and Wu, Zhiyong and Kong, LingPeng},
  journal={ICLR},
  year={2023}
}

@article{Xu2025energytext,
  title={Energy-based diffusion language models for text generation},
  author={Xu, Minkai and Geffner, Tomas and Kreis, Karsten and Nie, Weili and Xu, Yilun and Leskovec, Jure and Ermon, Stefano and Vahdat, Arash},
  journal={ICLR},
  year={2025}
}

@inproceedings{Ryzhakov2024explicit,
  title={Explicit Flow Matching: On The Theory of Flow Matching Algorithms with Applications},
  author={Ryzhakov, Gleb and Pavlova, Svetlana and Sevriugov, Egor and Oseledets, Ivan},
  booktitle={ICOMP},
  year={2024}
}

@article{Vastola2025generalization,
  title={Generalization through variance: how noise shapes inductive biases in diffusion models},
  author={J. J. Vastola},
  journal={ICLR},
  year={2025}
}

@article{kamb2024analytic,
  title={An analytic theory of creativity in convolutional diffusion models},
  author={Kamb, Mason and Ganguli, Surya},
  journal={ICML},
  year={2025}
}

@article{Krizhevsky2009,
  title={Learning multiple layers of features from tiny images},
  author={A. Krizhevsky and G. Hinton},
  year={2009},
  publisher={Toronto, ON, Canada}
}

@article{Ho2020,
  title={Denoising diffusion probabilistic models},
  author={J. Ho and A. Jain and P. Abbeel},
  journal={NeuRIPS},
  year={2020}
}

@misc{StablediffXL,
    author = {{Stability~AI}},
  howpublished={\url{https://stability.ai/stablediffusion}},
  note = {Accessed: 2023-09-09},
  version = {Stable Diffusion XL},
  year = 2023
}

@inproceedings{szegedy2016rethinking,
  title={Rethinking the inception architecture for computer vision},
  author={Szegedy, Christian and Vanhoucke, Vincent and Ioffe, Sergey and Shlens, Jon and Wojna, Zbigniew},
  booktitle={Proceedings of the IEEE conference on computer vision and pattern recognition},
  pages={2818--2826},
  year={2016}
}

@article{huang2021variational,
  title={A variational perspective on diffusion-based generative models and score matching},
  author={Huang, Chin-Wei and Lim, Jae Hyun and Courville, Aaron C},
  journal={NeurIPS},
  volume={34},
  pages={22863--22876},
  year={2021}
}

@article{oquab2023dinov2,
  title={Dinov2: Learning robust visual features without supervision},
  author={Oquab, Maxime and Darcet, Timoth{\'e}e and Moutakanni, Th{\'e}o and Vo, Huy and Szafraniec, Marc and Khalidov, Vasil and Fernandez, Pierre and Haziza, Daniel and Massa, Francisco and El-Nouby, Alaaeldin and others},
  journal={arXiv preprint arXiv:2304.07193},
  year={2023}
}

@article{heusel2017gans,
  title={Gans trained by a two time-scale update rule converge to a local nash equilibrium},
  author={Heusel, Martin and Ramsauer, Hubert and Unterthiner, Thomas and Nessler, Bernhard and Hochreiter, Sepp},
  journal={NeurIPS},
  volume={30},
  year={2017}
}

@article{gao2024flow,
  title={How Do Flow Matching Models Memorize and Generalize in Sample Data Subspaces?},
  author={Gao, Weiguo and Li, Ming},
  journal={arXiv preprint arXiv:2410.23594},
  year={2024}
}

@inproceedings{zhang2018unreasonable,
  title={The unreasonable effectiveness of deep features as a perceptual metric},
  author={Zhang, Richard and Isola, Phillip and Efros, Alexei A and Shechtman, Eli and Wang, Oliver},
  booktitle={Proceedings of the IEEE conference on computer vision and pattern recognition},
  pages={586--595},
  year={2018}
}

@article{Scarvelis2023,
  title={Closed-form diffusion models},
  author={C. Scarvelis and H. S. Borde de Oc{\'a}riz and J. Solomon},
  journal={TMLR},
  year={2025}
}

@article{Biroli2024,
  title={Dynamical regimes of diffusion models},
  author={G. Biroli and T. Bonnaire and V.  de Bortoli and M. M{\'e}zard},
  journal={Nature Communications},
  volume={15},
  number={1},
  pages={9957},
  year={2024},
  publisher={Nature Publishing Group UK London}
}

@article{Jiralerspong2023fld,
  title={Feature likelihood divergence: evaluating the generalization of generative models using samples},
  author={Jiralerspong, Marco and Bose, Joey and Gemp, Ian and Qin, Chongli and Bachrach, Yoram and Gidel, Gauthier},
  journal={{NeurIPS}},
  year={2023}
}

@article{stein2023evalgenmodels,
  title={Exposing flaws of generative model evaluation metrics and their unfair treatment of diffusion models},
  author={Stein, George and Cresswell, Jesse and Hosseinzadeh, Rasa and Sui, Yi and Ross, Brendan and Villecroze, Valentin and Liu, Zhaoyan and Caterini, Anthony L and Taylor, Eric and Loaiza-Ganem, Gabriel},
  journal={{NeurIPS}},
  volume={36},
  pages={3732--3784},
  year={2023}
}

@article{kadkhodaie2024generalization,
  title={Generalization in diffusion models arises from geometry-adaptive harmonic representations},
  author={Kadkhodaie, Zahra and Guth, Florentin and Simoncelli, Eero P and Mallat, St{\'e}phane},
  journal={ICLR},
  year={2024}
}

@article{ross2025memorization,
  title={A geometric framework for understanding memorization in generative models},
  author={Ross, Brendan Leigh and Kamkari, Hamidreza and Wu, Tongzi and Hosseinzadeh, Rasa and Liu, Zhaoyan and Stein, George and Cresswell, Jesse C and Loaiza-Ganem, Gabriel},
  journal={ICLR},
  year={2025}
}

@inproceedings{sohlDickstein2015diffusion,
  title={Deep unsupervised learning using nonequilibrium thermodynamics},
  author={Sohl-Dickstein, Jascha and Weiss, Eric and Maheswaranathan, Niru and Ganguli, Surya},
  booktitle={ICML},
  year={2015},
}

@article{martin2024pnpflow,
  title={PnP-Flow: Plug-and-play image restoration with flow matching},
  author={Martin, S{\'e}gol{\`e}ne and Gagneux, Anne and Hagemann, Paul and Steidl, Gabriele},
  journal={ICLR},
  year={2025}
}

@article{pooladian2023multisample,
  title={Multisample flow matching: Straightening flows with minibatch couplings},
  author={Pooladian, Aram-Alexandre and Ben-Hamu, Heli and Domingo-Enrich, Carles and Amos, Brandon and Lipman, Yaron and Chen, Ricky TQ},
  journal={ICML},
  year={2023}
}

@article{Lipman2024guide,
  title={Flow matching guide and code},
  author={Lipman, Yaron and Havasi, Marton and Holderrieth, Peter and Shaul, Neta and Le, Matt and Karrer, Brian and Chen, Ricky TQ and Lopez-Paz, David and Ben-Hamu, Heli and Gat, Itai},
  journal={arXiv preprint arXiv:2412.06264},
  year={2024}
}

@article{albergo2023stochasticinterpolant,
  title={Building normalizing flows with stochastic interpolants},
  author={Albergo, Michael S and Vanden-Eijnden, Eric},
  journal={ICLR},
  year={2023}
}

@article{lipman2023flow,
  title={Flow matching for generative modeling},
  author={Lipman, Yaron and Chen, Ricky TQ and Ben-Hamu, Heli and Nickel, Maximilian and Le, Matt},
  journal={ICLR},
  year={2023}
}

@inproceedings{Gagneux2025visual_flow_matching,
  title={A Visual Dive into Conditional Flow Matching},
  author={Gagneux, Anne and Martin, S{\'e}gol{\`e}ne and Emonet, R{\'e}mi and Bertrand, Quentin and Massias, Mathurin},
  booktitle={The Fourth Blogpost Track at ICLR},
  year={2025}
}

@article{song2020diffusion,
  title={Score-based generative modeling through stochastic differential equations},
  author={Song, Yang and Sohl-Dickstein, Jascha and Kingma, Diederik P and Kumar, Abhishek and Ermon, Stefano and Poole, Ben},
  journal={ICLR},
  year={2021}
}

@article{liu2023rectifiedflow,
  title={Flow straight and fast: Learning to generate and transfer data with rectified flow},
  author={Liu, Xingchao and Gong, Chengyue and Liu, Qiang},
  journal={ICLR},
  year={2023}
}

@article{videoworldsimulators2024,
  title={Video generation models as world simulators},
  author={Tim Brooks and Bill Peebles and Connor Holmes and Will DePue and Yufei Guo and Li Jing and David Schnurr and Joe Taylor and Troy Luhman and Eric Luhman and Clarence Ng and Ricky Wang and Aditya Ramesh},
  year={2024},
  url={https://openai.com/research/video-generation-models-as-world-simulators},
}

@article{Borsos2023,
  title={Audiolm: a language modeling approach to audio generation},
  author={Borsos, Zal{\'a}n and Marinier, Rapha{\"e}l and Vincent, Damien and Kharitonov, Eugene and Pietquin, Olivier and Sharifi, Matt and Roblek, Dominik and Teboul, Olivier and Grangier, David and Tagliasacchi, Marco and others},
  journal={IEEE/ACM Transactions on Audio, Speech, and Language Processing},
  year={2023},
  publisher={IEEE}
}

@inproceedings{Villegas2022,
  title={Phenaki: Variable length video generation from open domain textual descriptions},
  author={Villegas, Ruben and Babaeizadeh, Mohammad and Kindermans, Pieter-Jan and Moraldo, Hernan and Zhang, Han and Saffar, Mohammad Taghi and Castro, Santiago and Kunze, Julius and Erhan, Dumitru},
  booktitle={ICLR},
  year={2022}
}

@inproceedings{parmar2021cleanfid,
  title={On Aliased Resizing and Surprising Subtleties in GAN Evaluation},
  author={Parmar, Gaurav and Zhang, Richard and Zhu, Jun-Yan},
  booktitle={CVPR},
  year={2022}
}

@article{diffusion_fm_blogpost,
  title={Diffusion Meets Flow Matching: Two Sides of the Same Coin},
author = {R. Gao and E. Hoogeboom and J. Heek and V. de Bortoli and K. Murphy and T. Salimans},
  journal = {ICLR Blogpost},
  year = 2025
}

@inproceedings{tong2023improving,
  title={Improving and Generalizing Flow-Based Generative Models with Minibatch Optimal Transport},
  author={Alexander Tong and Nikolay Malkin and Guillaume Huguet and Yanlei Zhang and Jarrid Rector-Brooks and Kilian Fatras and Guy Wolf and Yoshua Bengio},
  booktitle={TMLR},
  year={2024},
  url={https://openreview.net/forum?id=CD9Snc73AW}
}
% \newpage

%%%%%%%%%%%%%%%%%%%%%%%%%%%%%%%%%%%%%%%%%%%%%%%%%%%%%%%%%%%%

\newpage
\section*{NeurIPS Paper Checklist}

%%% BEGIN INSTRUCTIONS %%%
The checklist is designed to encourage best practices for responsible machine learning research, addressing issues of reproducibility, transparency, research ethics, and societal impact. Do not remove the checklist: {\bf The papers not including the checklist will be desk rejected.} The checklist should follow the references and follow the (optional) supplemental material.  The checklist does NOT count towards the page
limit.

Please read the checklist guidelines carefully for information on how to answer these questions. For each question in the checklist:
\begin{itemize}
    \item You should answer \answerYes{}, \answerNo{}, or \answerNA{}.
    \item \answerNA{} means either that the question is Not Applicable for that particular paper or the relevant information is Not Available.
    \item Please provide a short (1–2 sentence) justification right after your answer (even for NA).
   % \item {\bf The papers not including the checklist will be desk rejected.}
\end{itemize}

{\bf The checklist answers are an integral part of your paper submission.} They are visible to the reviewers, area chairs, senior area chairs, and ethics reviewers. You will be asked to also include it (after eventual revisions) with the final version of your paper, and its final version will be published with the paper.

The reviewers of your paper will be asked to use the checklist as one of the factors in their evaluation. While "\answerYes{}" is generally preferable to "\answerNo{}", it is perfectly acceptable to answer "\answerNo{}" provided a proper justification is given (e.g., "error bars are not reported because it would be too computationally expensive" or "we were unable to find the license for the dataset we used"). In general, answering "\answerNo{}" or "\answerNA{}" is not grounds for rejection. While the questions are phrased in a binary way, we acknowledge that the true answer is often more nuanced, so please just use your best judgment and write a justification to elaborate. All supporting evidence can appear either in the main paper or the supplemental material, provided in appendix. If you answer \answerYes{} to a question, in the justification please point to the section(s) where related material for the question can be found.

IMPORTANT, please:
\begin{itemize}
    \item {\bf Delete this instruction block, but keep the section heading ``NeurIPS Paper Checklist"},
    \item  {\bf Keep the checklist subsection headings, questions/answers and guidelines below.}
    \item {\bf Do not modify the questions and only use the provided macros for your answers}.
\end{itemize}

%%% END INSTRUCTIONS %%%

\begin{enumerate}

\item {\bf Claims}
    \item[] Question: Do the main claims made in the abstract and introduction accurately reflect the paper's contributions and scope?
    \item[] Answer: \answerYes{} % Replace by \answerYes{}, \answerNo{}, or \answerNA{}.
    \item[] Justification: each claim of the abstract refers to a specific subsection of the paper, that provide empirical evidence of the claim.
    \item[] Guidelines:
    \begin{itemize}
        \item The answer NA means that the abstract and introduction do not include the claims made in the paper.
        \item The abstract and/or introduction should clearly state the claims made, including the contributions made in the paper and important assumptions and limitations. A No or NA answer to this question will not be perceived well by the reviewers.
        \item The claims made should match theoretical and experimental results, and reflect how much the results can be expected to generalize to other settings.
        \item It is fine to include aspirational goals as motivation as long as it is clear that these goals are not attained by the paper.
    \end{itemize}

\item {\bf Limitations}
    \item[] Question: Does the paper discuss the limitations of the work performed by the authors?
    \item[] Answer: \answerYes{}% Replace by \answerYes{}, \answerNo{}, or \answerNA{}.
    \item[] Justification: We do have a specific section for the limitation of our work
    \item[] Guidelines:
    \begin{itemize}
        \item The answer NA means that the paper has no limitation while the answer No means that the paper has limitations, but those are not discussed in the paper.
        \item The authors are encouraged to create a separate "Limitations" section in their paper.
        \item The paper should point out any strong assumptions and how robust the results are to violations of these assumptions (e.g., independence assumptions, noiseless settings, model well-specification, asymptotic approximations only holding locally). The authors should reflect on how these assumptions might be violated in practice and what the implications would be.
        \item The authors should reflect on the scope of the claims made, e.g., if the approach was only tested on a few datasets or with a few runs. In general, empirical results often depend on implicit assumptions, which should be articulated.
        \item The authors should reflect on the factors that influence the performance of the approach. For example, a facial recognition algorithm may perform poorly when image resolution is low or images are taken in low lighting. Or a speech-to-text system might not be used reliably to provide closed captions for online lectures because it fails to handle technical jargon.
        \item The authors should discuss the computational efficiency of the proposed algorithms and how they scale with dataset size.
        \item If applicable, the authors should discuss possible limitations of their approach to address problems of privacy and fairness.
        \item While the authors might fear that complete honesty about limitations might be used by reviewers as grounds for rejection, a worse outcome might be that reviewers discover limitations that aren't acknowledged in the paper. The authors should use their best judgment and recognize that individual actions in favor of transparency play an important role in developing norms that preserve the integrity of the community. Reviewers will be specifically instructed to not penalize honesty concerning limitations.
    \end{itemize}

\item {\bf Theory assumptions and proofs}
    \item[] Question: For each theoretical result, does the paper provide the full set of assumptions and a complete (and correct) proof?
    \item[] Answer: \answerYes{} % Replace by \answerYes{}, \answerNo{}, or \answerNA{}.
    \item[] Justification: all results are encapsulated in clearly defined statements, and proofs are provided in appendix.
    \item[] Guidelines:
    \begin{itemize}
        \item The answer NA means that the paper does not include theoretical results.
        \item All the theorems, formulas, and proofs in the paper should be numbered and cross-referenced.
        \item All assumptions should be clearly stated or referenced in the statement of any theorems.
        \item The proofs can either appear in the main paper or the supplemental material, but if they appear in the supplemental material, the authors are encouraged to provide a short proof sketch to provide intuition.
        \item Inversely, any informal proof provided in the core of the paper should be complemented by formal proofs provided in appendix or supplemental material.
        \item Theorems and Lemmas that the proof relies upon should be properly referenced.
    \end{itemize}

    \item {\bf Experimental result reproducibility}
    \item[] Question: Does the paper fully disclose all the information needed to reproduce the main experimental results of the paper to the extent that it affects the main claims and/or conclusions of the paper (regardless of whether the code and data are provided or not)?
    \item[] Answer: \answerYes{} % Replace by \answerYes{}, \answerNo{}, or \answerNA{}.
    \item[] Justification: We provided as many details as possible in order to reproduce the results, in particular, we refer to the public implementation we used, including the specific (default) parameters used.
    \item[] Guidelines:
    \begin{itemize}
        \item The answer NA means that the paper does not include experiments.
        \item If the paper includes experiments, a No answer to this question will not be perceived well by the reviewers: Making the paper reproducible is important, regardless of whether the code and data are provided or not.
        \item If the contribution is a dataset and/or model, the authors should describe the steps taken to make their results reproducible or verifiable.
        \item Depending on the contribution, reproducibility can be accomplished in various ways. For example, if the contribution is a novel architecture, describing the architecture fully might suffice, or if the contribution is a specific model and empirical evaluation, it may be necessary to either make it possible for others to replicate the model with the same dataset, or provide access to the model. In general. releasing code and data is often one good way to accomplish this, but reproducibility can also be provided via detailed instructions for how to replicate the results, access to a hosted model (e.g., in the case of a large language model), releasing of a model checkpoint, or other means that are appropriate to the research performed.
        \item While NeurIPS does not require releasing code, the conference does require all submissions to provide some reasonable avenue for reproducibility, which may depend on the nature of the contribution. For example
        \begin{enumerate}
            \item If the contribution is primarily a new algorithm, the paper should make it clear how to reproduce that algorithm.
            \item If the contribution is primarily a new model architecture, the paper should describe the architecture clearly and fully.
            \item If the contribution is a new model (e.g., a large language model), then there should either be a way to access this model for reproducing the results or a way to reproduce the model (e.g., with an open-source dataset or instructions for how to construct the dataset).
            \item We recognize that reproducibility may be tricky in some cases, in which case authors are welcome to describe the particular way they provide for reproducibility. In the case of closed-source models, it may be that access to the model is limited in some way (e.g., to registered users), but it should be possible for other researchers to have some path to reproducing or verifying the results.
        \end{enumerate}
    \end{itemize}

\item {\bf Open access to data and code}
    \item[] Question: Does the paper provide open access to the data and code, with sufficient instructions to faithfully reproduce the main experimental results, as described in supplemental material?
    \item[] Answer: \answerYes{} % Replace by \answerYes{}, \answerNo{}, or \answerNA{}.
    \item[] Justification: Code will be made available along with publication
    \item[] Guidelines:
    \begin{itemize}
        \item The answer NA means that paper does not include experiments requiring code.
        \item Please see the NeurIPS code and data submission guidelines (\url{https://nips.cc/public/guides/CodeSubmissionPolicy}) for more details.
        \item While we encourage the release of code and data, we understand that this might not be possible, so “No” is an acceptable answer. Papers cannot be rejected simply for not including code, unless this is central to the contribution (e.g., for a new open-source benchmark).
        \item The instructions should contain the exact command and environment needed to run to reproduce the results. See the NeurIPS code and data submission guidelines (\url{https://nips.cc/public/guides/CodeSubmissionPolicy}) for more details.
        \item The authors should provide instructions on data access and preparation, including how to access the raw data, preprocessed data, intermediate data, and generated data, etc.
        \item The authors should provide scripts to reproduce all experimental results for the new proposed method and baselines. If only a subset of experiments are reproducible, they should state which ones are omitted from the script and why.
        \item At submission time, to preserve anonymity, the authors should release anonymized versions (if applicable).
        \item Providing as much information as possible in supplemental material (appended to the paper) is recommended, but including URLs to data and code is permitted.
    \end{itemize}

\item {\bf Experimental setting/details}
    \item[] Question: Does the paper specify all the training and test details (e.g., data splits, hyperparameters, how they were chosen, type of optimizer, etc.) necessary to understand the results?
    \item[] Answer: \answerYes{} % Replace by \answerYes{}, \answerNo{}, or \answerNA{}.
    \item[] Justification: We provide a specific appendix with the experimental details
    \item[] Guidelines:
    \begin{itemize}
        \item The answer NA means that the paper does not include experiments.
        \item The experimental setting should be presented in the core of the paper to a level of detail that is necessary to appreciate the results and make sense of them.
        \item The full details can be provided either with the code, in appendix, or as supplemental material.
    \end{itemize}

\item {\bf Experiment statistical significance}
    \item[] Question: Does the paper report error bars suitably and correctly defined or other appropriate information about the statistical significance of the experiments?
    \item[] Answer: \answerYes{} % Replace by \answerYes{}, \answerNo{}, or \answerNA{}.
    \item[] Justification: We do not report error bars, however, we do specify the number of samples used for the FID computation and highlight the strong weaknesses of the FID metric.
    \item[] Guidelines:
    \begin{itemize}
        \item The answer NA means that the paper does not include experiments.
        \item The authors should answer "Yes" if the results are accompanied by error bars, confidence intervals, or statistical significance tests, at least for the experiments that support the main claims of the paper.
        \item The factors of variability that the error bars are capturing should be clearly stated (for example, train/test split, initialization, random drawing of some parameter, or overall run with given experimental conditions).
        \item The method for calculating the error bars should be explained (closed form formula, call to a library function, bootstrap, etc.)
        \item The assumptions made should be given (e.g., Normally distributed errors).
        \item It should be clear whether the error bar is the standard deviation or the standard error of the mean.
        \item It is OK to report 1-sigma error bars, but one should state it. The authors should preferably report a 2-sigma error bar than state that they have a 96\% CI, if the hypothesis of Normality of errors is not verified.
        \item For asymmetric distributions, the authors should be careful not to show in tables or figures symmetric error bars that would yield results that are out of range (e.g. negative error rates).
        \item If error bars are reported in tables or plots, The authors should explain in the text how they were calculated and reference the corresponding figures or tables in the text.
    \end{itemize}

\item {\bf Experiments compute resources}
    \item[] Question: For each experiment, does the paper provide sufficient information on the computer resources (type of compute workers, memory, time of execution) needed to reproduce the experiments?
    \item[] Answer: \answerYes{} % Replace by \answerYes{}, \answerNo{}, or \answerNA{}.
    \item[] Justification: we specified what type of GPU we used
    \item[] Guidelines:
    \begin{itemize}
        \item The answer NA means that the paper does not include experiments.
        \item The paper should indicate the type of compute workers CPU or GPU, internal cluster, or cloud provider, including relevant memory and storage.
        \item The paper should provide the amount of compute required for each of the individual experimental runs as well as estimate the total compute.
        \item The paper should disclose whether the full research project required more compute than the experiments reported in the paper (e.g., preliminary or failed experiments that didn't make it into the paper).
    \end{itemize}

\item {\bf Code of ethics}
    \item[] Question: Does the research conducted in the paper conform, in every respect, with the NeurIPS Code of Ethics \url{https://neurips.cc/public/EthicsGuidelines}?
    \item[] Answer: \answerYes{} % Replace by \answerYes{}, \answerNo{}, or \answerNA{}.
    \item[] Justification:  \answerNA{}
    \item[] Guidelines:
    \begin{itemize}
        \item The answer NA means that the authors have not reviewed the NeurIPS Code of Ethics.
        \item If the authors answer No, they should explain the special circumstances that require a deviation from the Code of Ethics.
        \item The authors should make sure to preserve anonymity (e.g., if there is a special consideration due to laws or regulations in their jurisdiction).
    \end{itemize}

\item {\bf Broader impacts}
    \item[] Question: Does the paper discuss both potential positive societal impacts and negative societal impacts of the work performed?
    \item[] Answer: \answerYes{} % Replace by \answerYes{}, \answerNo{}, or \answerNA{}.
    \item[] Justification: there is a dedicated broader impact section
    \item[] Guidelines:
    \begin{itemize}
        \item The answer NA means that there is no societal impact of the work performed.
        \item If the authors answer NA or No, they should explain why their work has no societal impact or why the paper does not address societal impact.
        \item Examples of negative societal impacts include potential malicious or unintended uses (e.g., disinformation, generating fake profiles, surveillance), fairness considerations (e.g., deployment of technologies that could make decisions that unfairly impact specific groups), privacy considerations, and security considerations.
        \item The conference expects that many papers will be foundational research and not tied to particular applications, let alone deployments. However, if there is a direct path to any negative applications, the authors should point it out. For example, it is legitimate to point out that an improvement in the quality of generative models could be used to generate deepfakes for disinformation. On the other hand, it is not needed to point out that a generic algorithm for optimizing neural networks could enable people to train models that generate Deepfakes faster.
        \item The authors should consider possible harms that could arise when the technology is being used as intended and functioning correctly, harms that could arise when the technology is being used as intended but gives incorrect results, and harms following from (intentional or unintentional) misuse of the technology.
        \item If there are negative societal impacts, the authors could also discuss possible mitigation strategies (e.g., gated release of models, providing defenses in addition to attacks, mechanisms for monitoring misuse, mechanisms to monitor how a system learns from feedback over time, improving the efficiency and accessibility of ML).
    \end{itemize}

\item {\bf Safeguards}
    \item[] Question: Does the paper describe safeguards that have been put in place for responsible release of data or models that have a high risk for misuse (e.g., pretrained language models, image generators, or scraped datasets)?
    \item[] Answer: \answerNo{} % Replace by \answerYes{}, \answerNo{}, or \answerNA{}.
    \item[] Justification: We work on standard image datasets
    \item[] Guidelines:
    \begin{itemize}
        \item The answer NA means that the paper poses no such risks.
        \item Released models that have a high risk for misuse or dual-use should be released with necessary safeguards to allow for controlled use of the model, for example by requiring that users adhere to usage guidelines or restrictions to access the model or implementing safety filters.
        \item Datasets that have been scraped from the Internet could pose safety risks. The authors should describe how they avoided releasing unsafe images.
        \item We recognize that providing effective safeguards is challenging, and many papers do not require this, but we encourage authors to take this into account and make a best faith effort.
    \end{itemize}

\item {\bf Licenses for existing assets}
    \item[] Question: Are the creators or original owners of assets (e.g., code, data, models), used in the paper, properly credited and are the license and terms of use explicitly mentioned and properly respected?
    \item[] Answer: \answerYes{} % Replace by \answerYes{}, \answerNo{}, or \answerNA{}.
    \item[] Justification: We properly refer the \texttt{torchcfm} and \texttt{PnPflow} codebase.
    \item[] Guidelines:
    \begin{itemize}
        \item The answer NA means that the paper does not use existing assets.
        \item The authors should cite the original paper that produced the code package or dataset.
        \item The authors should state which version of the asset is used and, if possible, include a URL.
        \item The name of the license (e.g., CC-BY 4.0) should be included for each asset.
        \item For scraped data from a particular source (e.g., website), the copyright and terms of service of that source should be provided.
        \item If assets are released, the license, copyright information, and terms of use in the package should be provided. For popular datasets, \url{paperswithcode.com/datasets} has curated licenses for some datasets. Their licensing guide can help determine the license of a dataset.
        \item For existing datasets that are re-packaged, both the original license and the license of the derived asset (if it has changed) should be provided.
        \item If this information is not available online, the authors are encouraged to reach out to the asset's creators.
    \end{itemize}

\item {\bf New assets}
    \item[] Question: Are new assets introduced in the paper well documented and is the documentation provided alongside the assets?
    \item[] Answer: \answerNA{}  % Replace by \answerYes{}, \answerNo{}, or \answerNA{}.
    \item[] Justification: \answerNA{}
    \item[] Guidelines:
    \begin{itemize}
        \item The answer NA means that the paper does not release new assets.
        \item Researchers should communicate the details of the dataset/code/model as part of their submissions via structured templates. This includes details about training, license, limitations, etc.
        \item The paper should discuss whether and how consent was obtained from people whose asset is used.
        \item At submission time, remember to anonymize your assets (if applicable). You can either create an anonymized URL or include an anonymized zip file.
    \end{itemize}

\item {\bf Crowdsourcing and research with human subjects}
    \item[] Question: For crowdsourcing experiments and research with human subjects, does the paper include the full text of instructions given to participants and screenshots, if applicable, as well as details about compensation (if any)?
    \item[] Answer: \answerNA{}  % Replace by \answerYes{}, \answerNo{}, or \answerNA{}.
    \item[] Justification: \answerNA{}
    \item[] Guidelines:
    \begin{itemize}
        \item The answer NA means that the paper does not involve crowdsourcing nor research with human subjects.
        \item Including this information in the supplemental material is fine, but if the main contribution of the paper involves human subjects, then as much detail as possible should be included in the main paper.
        \item According to the NeurIPS Code of Ethics, workers involved in data collection, curation, or other labor should be paid at least the minimum wage in the country of the data collector.
    \end{itemize}

\item {\bf Institutional review board (IRB) approvals or equivalent for research with human subjects}
    \item[] Question: Does the paper describe potential risks incurred by study participants, whether such risks were disclosed to the subjects, and whether Institutional Review Board (IRB) approvals (or an equivalent approval/review based on the requirements of your country or institution) were obtained?
    \item[] Answer: \answerNA{} % Replace by \answerYes{}, \answerNo{}, or \answerNA{}.
    \item[] Justification: \answerNA{}
    \item[] Guidelines:
    \begin{itemize}
        \item The answer NA means that the paper does not involve crowdsourcing nor research with human subjects.
        \item Depending on the country in which research is conducted, IRB approval (or equivalent) may be required for any human subjects research. If you obtained IRB approval, you should clearly state this in the paper.
        \item We recognize that the procedures for this may vary significantly between institutions and locations, and we expect authors to adhere to the NeurIPS Code of Ethics and the guidelines for their institution.
        \item For initial submissions, do not include any information that would break anonymity (if applicable), such as the institution conducting the review.
    \end{itemize}

\item {\bf Declaration of LLM usage}
    \item[] Question: Does the paper describe the usage of LLMs if it is an important, original, or non-standard component of the core methods in this research? Note that if the LLM is used only for writing, editing, or formatting purposes and does not impact the core methodology, scientific rigorousness, or originality of the research, declaration is not required.
    %this research?
    \item[] Answer: \answerNo{} % Replace by \answerYes{}, \answerNo{}, or \answerNA{}.
    \item[] Justification: LLMs were only used for grammatical purposes.
    \item[] Guidelines:
    \begin{itemize}
        \item The answer NA means that the core method development in this research does not involve LLMs as any important, original, or non-standard components.
        \item Please refer to our LLM policy (\url{https://neurips.cc/Conferences/2025/LLM}) for what should or should not be described.
    \end{itemize}

\end{enumerate}

\newpage
\appendix
% !TEX root = ../main.tex

%%%%%%%%%%%%%%%%%%%%%%%%%%%%%%%%%%%%%%%%%%%%%%%%%%%%%%%%%%%%%%%%%%%%%%%%%%%%%%%%%%
%%%%%%%%%%%%%%%%%%%%%%%%%%%%%%%%%%%%%%%%%%%%%%%%%%%%%%%%%%%%%%%%%%%%%%%%%%%%%%%%%%
\section{Proofs of \Cref{sec:cfmrecalls}}
%%%%%%%%%%%%%%%%%%%%%%%%%%%%%%%%%%%%%%%%%%%%%%%%%%%%%%%%%%%%%%%%%%%%%%%%%%%%%%%%%%
\begin{align}\label{eq:general_closed_form}
    \utothat(x, t) = \sum_{i=1}^n \ucond(x, z=\data{i}, t) \cdot \frac{p(x | z=\data{i}, t)}{\sum_{i'=1}^n p(x | z=\data{i'}, t)}
    \enspace.
\end{align}
%%%%%%%%%%%%%%%%%%%%%%%%%%%%%%%%%%%%%%%%%%%%%%%%%%%%%%%%%%%%%%%%%%%%%%%%%%%%%%%%%%
\subsection{Proof of \Cref{prop_closed_form_velocity}}
\label{app:pf_closed_form_velocity}
%%%%%%%%%%%%%%%%%%%%%%%%%%%%%%%%%%%%%%%%%%%%%%%%%%%%%%%%%%%%%%%%%%%%%%%%%%%%%%%%%%
\begin{proof}
$\bullet$ In the case where $z \sim \pdatahat$, conditional probability writes
    \begin{align}
        p(z=\data{i} | x, t)
        &= \frac{p(x, t, z=\data{i})}{p(x, t)}
        \\
        &= \frac{p(x | t, z=\data{i}) p(t, z=\data{i}) }{p(x, t)}
        \\
        &= \frac{p(x | t, z=\data{i}) p(t, z=\data{i}) }{\sum_{i'=1}^n p(x, t, z=\data{i'})}
        \\
        &= \frac{p(x | t, z=\data{i}) p(t) \overbrace{p(z=\data{i})}^{\frac{1}{n}} }{\sum_{i'=1}^n p(x | t, z=\data{i'})  p(t) \underbrace{p(z=\data{i'})}_{\frac{1}{n}} }
        \\
        &= \frac{p(x | t, z=\data{i}) }{\sum_{i'=1}^n p(x | t, z=\data{i'})}
        \label{eq_conditional_prob}
        \enspace.
    \end{align}
    Pluging \Cref{eq_conditional_prob} in \Cref{eq_inversion_formula} yields the closed-formed formula for the velocity field:
    \begin{align}
        \utot(x, t ) &=
        \sum_{i=1}^n \ucond(x, t, z=\data{i}) p(z=\data{i} | x, t)
        \\
        &=
        \sum_{i=1}^n \ucond(x, t, z=\data{i}) \frac{p(x | t, z=\data{i}) }{\sum_{i'=1}^n p(x | t, z=\data{i'})}
        \enspace.
    \end{align}
    which proves \Cref{eq:general_closed_form}; using that $x | t, z = \data{i} \sim \cN(t \data{i}, (1 -t)^2 \Id)$ and $\ucond(x, t, z = \data{i}) = \frac{\data{i} - x}{1 - t}$ yields \Cref{eq:closed_form}.

$\bullet$ For the case $z \sim p_0 \times \pdatahat$,
\begin{align}
    \utothat(x, t)
    &:= \int_z \ucond(x, t, z) p(z | x, t) \dz \\
    &= \int_z  \ucond(x, t, z) \frac{p(x, z, t)}{p(x, t)} \dz \\
    &= \int_z \ucond(x, t, z) \frac{p(x| z, t) p(z) p(t)}{\int_{z'} p(x| t, z')p(t) p(z') \dz'} \dz \\
    &= \int_z \ucond(x, t, z) \frac{p(x| z, t) p(z)}{\int_{z'} p(x|t, z') p(z') \dz'} \dz
\end{align}
Since $z \sim p_0 \times \pdatahat$, the denominator is equal to:

\begin{align}
    \label{eq:velo1}
    \int_{z'} p(x| t, z') p(z') \dz'
    &= \frac1n \int_{x_0} \sum_{i=1}^n  \delta_{x}((1 - t) x_0 + t \data{i}) \frac{1}{\sqrt{(2\pi)^d}} \exp(-\frac{1}{2} x_0^2) \dx_0  \\
        &= \frac1n \int_{y} \sum_{i=1}^n  \delta_{x}(y) \frac{1}{\sqrt{(2\pi)^d}} \exp(-\frac{1}{2(1-t)^2} \Vert y - t\data{i} \Vert ^2) \frac{1}{(1-t)^d} \mathrm{d}y  \quad \quad (y = (1 - t) x_0 + t \data{i})\\
    &=\frac1n \sum_{i=1}^n \frac{1}{\sqrt{(2\pi (1 -t)^2)^d}} \exp\left(-\frac{1}{2 (1-t)^2} \Vert x - t\data{i} \Vert^2\right)
\end{align}

Likewise, the numerator equals:
\begin{align}
\label{eq:velo2}
    \int_z \ucond(x, t, z) p(x| z, t) p(z)\dz
    & =
    \int_{x_0} \frac1n \sum_{i=1}^n  (\data{i} - x_0) \delta_{x}((1-t)x_0 + t x^{(i)}) \frac{1}{\sqrt{(2\pi)^d}} \exp(-\frac{1}{2} \Vert x_0 \Vert^2) \dx_0  \\
    & = \frac1n  \sum_{i=1}^n \int_{y} \frac{\data{i} - y}{1-t} \delta_{x}(y) \frac{1}{\sqrt{(2\pi(1-t)^2)^d}} \exp(-\frac{1}{2(1-t)^2} \Vert y - tx^{(i)} \Vert^2) \mathrm{d} y  \\
    &= \sum_{i=1}^n \frac{\data{i} - x}{1 - t}  \frac{1}{\sqrt{(2\pi(1-t)^2)^d}} \exp\left(-\frac{1}{2 (1-t)^2} \Vert x - t\data{i})\Vert^2\right)  \label{eq:velo2}
\end{align}
Taking the ratio of \Cref{eq:velo1,eq:velo2} concludes the proof.
\end{proof}

%%%%%%%%%%%%%%%%%%%%%%%%%%%%%%%%%%%%%%%%%%%%%%%%%%%%%%%%%%%%%%%%%%%%%%%%%%%%%%%%%%
% \subsection{Proof of \Cref{prop_training_set_generation}}
%%%%%%%%%%%%%%%%%%%%%%%%%%%%%%%%%%%%%%%%%%%%%%%%%%%%%%%%%%%%%%%%%%%%%%%%%%%%%%%%%%

% !TEX root = ../main.tex

%%%%%%%%%%%%%%%%%%%%%%%%%%%%%%%%%%%%%%%%%%%%%%%%%%%%%%%%%%%%%%%%%%%%%%%%%%%%%%%%%%
\section{Additional details and comments on empirical flow matching}
\label{app_efm}
%%%%%%%%%%%%%%%%%%%%%%%%%%%%%%%%%%%%%%%%%%%%%%%%%%%%%%%%%%%%%%%%%%%%%%%%%%%%%%%%%%
First, recalls on the optimal velocity (\Cref{eq:closed_form}) and the empirical flow matching loss (\Cref{eq:l_efm,eq:l_efm_softmax}) are provided in \Cref{app_sub_recalls}.
The unbiasedness of the estimator is presented in \Cref{app_sub_theoretical_res}, and its proof is in \Cref{app_sub_proof_efm}.

%%%%%%%%%%%%%%%%%%%%%%%%%%%%%%%%%%%%%%%%%%%%%%%%%%%%%%%%%%%%%%%%%%%%%%%%%%%%%%%%%%
\subsection{Recalls}
\label{app_sub_recalls}
%%%%%%%%%%%%%%%%%%%%%%%%%%%%%%%%%%%%%%%%%%%%%%%%%%%%%%%%%%%%%%%%%%%%%%%%%%%%%%%%%%
The closed-form formula of the "optimal" velocity field is:
\begin{align} \tag{\ref{eq:closed_form}}
        \utothat(x, t)
    & =
    \sum_{l=1}^{n}
    \frac{\data{l} - x}{1-t}
    \cdot
    \left[
        \softmax \left( \left(-\frac{\normin{x - t \data{k} }^2}{2(1 -t)^2} \right)_{k = 1, \dots, n} \right)
    \right]_l
    \enspace.
    % \\
    % \label{app_eq_closedform}
    % &=
    % \sum_{j=1}^{n}
    % \frac{\data{j} - x_t}{1-t}
    % \cdot
    % \underbrace{    \dfrac{
    %     \exp{
    %     \left ( -\frac{\normin{x_t - t \data{j} }^2}{2(1 -t)^2} \right ) }
    %     }{
    %    \sum_{j'=1}^{n} \exp{ \left (
    %    -\frac{\normin{x_t - t \data{j'} }^2}{2(1 -t)^2} \right ) }
    %     }}_{= p(\data{i} | x, t) }
    %     \enspace,
\end{align}

The proposed loss uses mini-batches of size $M$ (instead of all $n$ training points) to build an estimator $\utothatemp$ of $\utothat$:

\begin{empheq}[box=\fcolorbox{blue!40!black!10}{green!05}]{align} \tag{\ref{eq:l_efm}}
    %\min_\theta
    \mathcal{L}_{\mathrm{EFM}}(\theta)
    & =
    \mathbb{E}_{
        \substack{
        t \sim \mathcal{U}([0, 1]) \\
        x_0 \sim p_0 \\
        x_1 \sim \pdatahat \\
        x_t = (1-t) x_0 + t x_1 \\
        \batch{1} := x_1 \; ;\; \batch{2}, \dots, \batch{M} \sim \pdatahat
        }}
    \Vert u_\theta(x_t, t) -
    \utothatemp(x_t, t) \Vert^2
    \enspace,
    \\
    \text{ with} \nonumber\\
    \utothatemp(x_t, t)
    & =
    \sum_{j=1}^{M}
    \frac{\batch{j} - x_t}{1-t}
    \cdot
    % \left[
    %     \softmax \left (-\frac{\normin{x_t - t \cdot b }^2}{2(1 -t)^2} \right )
    % \right]_j
    \left[
        \softmax \left( \left(-\frac{\normin{x_t - t b^{(k)} }^2}{2(1 -t)^2} \right)_{k = 1, \dots, M} \right)
    \right]_j
    \enspace.
    \tag{\ref{eq:l_efm_softmax}}
\end{empheq}

Crucially, in \Cref{eq:l_efm}  the sample $\batch{1}$ depends on $x_t$ and is reused in the estimate $\utothatemp$.
This important detail yields an unbiased estimator of $\utothat$.

%%%%%%%%%%%%%%%%%%%%%%%%%%%%%%%%%%%%%%%%%%%%%%%%%%%%%%%%%%%%%%
\subsection{Theoretical properties of the proposed estimator}
\label{app_sub_theoretical_res}
%%%%%%%%%%%%%%%%%%%%%%%%%%%%%%%%%%%%%%%%%%%%%%%%%%%%%%%%%%%%%%

First, we discuss below the relation between \Cref{app_prop_efm} and the sampling literature.

\xhdr{Links with importance sampling}
The estimator $\utothat$ in \Cref{eq:closed_form} can be seen as a form of \textit{importance sampling} (see \citealt[Chap. 3]{robert1999montecarlobook} for an in-depth reference).
In a nutshell, importance sampling is a way to estimate an expectation when one cannot easily sample from the random variable it depends on.
More precisely, in the ideal case $z \sim \pdata$ (as opposed to $z \sim \pdatahat$), the velocity field formula is the following
\begin{align}
    \utot(x_t, t)
    &=
    \mathbb{E}_{z | x_t, t} \left[\ucond(x_t, z, t) \right] \\
   &=
    \int_{z} \ucond(x_t, z, t)  p(z | x_t, t)  \diff z
    \enspace.
\end{align}
When $z \sim \pdata$, it is difficult to sample from $z | x_t, t$, but the latter equation can be rewritten as
\begin{align}
    \utot(x_t, t)
   &=
    \int_{z} \ucond(x_t, z, t) \frac{ p(z | x_t, t)}{p(z)} p(z)  \diff z
\end{align}
and one can easily sample from $z \sim \pdatahat$ using the empirical data distribution $\data{1}, \dots, \data{n}$
\begin{align}
    \utot(x_t, t)
    &
    \approx
    \frac{1}{n} \sum_{i=1}^n \ucond(x_t, \data{i}, t) \frac{ p(z=\data{i} | x_t, t)}{p(\data{i})}
    \\
    & \phantom{\approx} =
    \sum_{i=1}^n \ucond(x_t, \data{i}, t) p(z=\data{i} | x_t, t)
    \\
    & \phantom{\approx} :=
    \utothat(x_t, t) \enspace.
\end{align}

%%%%%%%%%%%%%%%%%%%%%%%%%%%%%%%%%%%%%%%%%%%%%%%%%%%%%%%%%%%%%%%%

%%%%%%%%%%%%%%%%%%%%%%%%%%%%%%%%%%%%%%%%%%%%%%%%%%%%%%%%%%%%%%%
%%%%%%%%%%%%%%%%%%%%%%%%%%%%%%%%%%%%%%%%%%%%%%%%%%%%%%%%%%%%%%%
\subsection{Proof of \Cref{app_prop_efm}}
\label{app_sub_proof_efm}
%%%%%%%%%%%%%%%%%%%%%%%%%%%%%%%%%%%%%%%%%%%%%%%%%%%%%%%%%%%%%%%
We first recall \Cref{app_efm}, which we prove in this section.
\propefm*

\begin{proof}[Proof of Item~\eqref{app_prop_minimizer}.]
    With no constraints on $\utheta$, the empirical flow matching loss writes:
    \begin{align}
        % \argmin_{ u_\theta(x_t, t)}
        &
        \;
        \mathbb{E}_{
            \substack{
            t \sim \mathcal{U}([0, 1]) \\
            % x_0 \sim p_0 \\
            x_1 \sim \pdatahat \\
            x_t = (1-t) x_0 + t x_1 \\
            \batch{1} := x_1 \; ;\; \batch{2}, \dots, \batch{M} \sim \pdatahat
            }}
        \Vert u_\theta(x_t, t) -
        \utothatemp(x_t, t) \Vert^2
        \enspace,
        \\
        =\;
        &
        \mathbb{E}_{
            \substack{
            t \sim \mathcal{U}([0, 1]) \\
            x_t \sim p_t \\
            % x_0 \sim p_0 \\
            % x_1 | x_t, t \\
            % \batch{1} := x_1 \; ;\; \batch{2}, \dots, \batch{M} | x_t, t
            }}
        \mathbb{E}_{
            \substack{
            % t \sim \mathcal{U}([0, 1]) \\
            % x_t = (1-t) x_0 + t x_1 \\
            % x_0 \sim p_0 \\
            \batch{1} \sim \pdatahat(\cdot | x_t, t) \\
            \batch{2}, \dots, \batch{M} | x_t, t
            }}
        \Vert u_\theta(x_t, t) -
        \utothatemp(x_t, t) \Vert^2
        \enspace,
        \\
        =\;
        &
        \mathbb{E}_{
            \substack{
            t \sim \mathcal{U}([0, 1]) \\
            x_t \sim p_t \\
            % x_0 \sim p_0 \\
            % x_1 | x_t, t \\
            % \batch{1} := x_1 \; ;\; \batch{2}, \dots, \batch{M} | x_t, t
            }}
        \mathbb{E}_{
            \substack{
            % t \sim \mathcal{U}([0, 1]) \\
            % x_t = (1-t) x_0 + t x_1 \\
            % x_0 \sim p_0 \\
            \batch{1} := \pdatahat(\cdot | x_t, t) \\
            \batch{2}, \dots, \batch{M} \sim \pdatahat
            }}
        \Vert u_\theta(x_t, t) -
        \utothatemp(x_t, t) \Vert^2
        \enspace \text{because $\batch{2}, \dots, \batch{M} \indep x_t, t$} \enspace,
        \label{app_eq_toward_minimizer}
    \end{align}
which is minimized when for all $x_t, t$
\begin{align}
    \utheta(x_t, t)
    =
    \mathbb{E}_{
        \substack{
        \batch{1} \sim \pdatahat(\cdot | x_t, t) \\
        % \batch{2}, \dots, \batch{M}
        \batch{2}, \dots, \batch{M} \sim \pdatahat
        }}
        \left[\utothatemp(x_t, t)\right]
        \enspace.
\end{align}

    % $\batch{2}, \dots, \batch{M}
\end{proof}
%%%%%%%%%%%%%%%%%%%%%%%%%%%%%%%%%%%%%%%%%%%%%%%%%%%%%%%%%%%%%%%
\begin{proof}[Proof of Item~\eqref{app_prop_unbiased}.]
    The minimizer for a given $(x_t,t)$, removing these elements from the notation for conciseness and abstraction, is a weighted mean:

    \newcommand{\uc}[1]{u^{(#1)}}
    \newcommand{\w}[1]{w^{(#1)}}
    \begin{align}
        \utothat(x_t,t) = \utothat & = \sum_{l=1}^n \w{l} \uc{l}
        \enspace,
        \text{ with }
        \\
    % \end{align}
    % \begin{align}
        \w{l}  &= \pdatahat(z=\data{l}|t,x_t)
        \enspace , \;
        % &= K \cdot \exp\left(-\frac{\normin{x_t - t \data{j} }^2}{2(1 -t)^2} \right) \\
        \sum_{l=1}^n \w{l} = 1 \label{eq_w_softmax}
        \\
        \uc{l} &= \ucond(x_t, \data{l}, t)
    \end{align}
    % {\footnotesize ($K$ is the normalization constant.
    % It can be explicitly written, but it simplifies in the mini-batch estimator.)} \\
    % \begin{align}
    %     % &= \frac{\data{j} - x_t}{1-t} \\
    % \end{align}

    We express a mini-batch as an $M$-valued vector of indices, $\bm{i} \in \llbracket 1, n \rrbracket^{M}$.
    The mini-batch estimate from~\Cref{eq:l_efm}, considering the definition of the $\softmax$, can be expressed as a mini-batch weighted-mean:

    \begin{align}
        \utothatemp(\bf{i}) &= \frac{\sum_{j=1}^{M} \w{\bf{i}_j} \uc{\bf{i}_j}}{\sum_{j=1}^{M} \w{\bf{i}_j}}
    \end{align}
    The categorical distribution over $\llbracket 1, n \rrbracket$ with probabilities following the weights $w$ in \eqref{eq_w_softmax} is denoted $\operatorname{Cat}(w)$ and the uniform distribution, \ie $\operatorname{Cat}(\mathbf{1}/n))$, is denoted $\operatorname{Unif}$.

    The main result of the following is that, in expectation over the biased-mini-batches, \textbf{where the first point is drawn according to $w$} and the $M-1$ other points are drawn uniformly, the mini-batch weighted-mean is an unbiased estimate of the $w$-weighted-mean $\utothat$.

    \begin{align}
        \mathbb{E} \left[\utothatemp(\bf{i}) \right]
        &:= \mathbb{E}_{{\bf{i}_1} \sim \operatorname{Cat}(w)} \mathbb{E}_{\bm{i}_2,...,\bm{i}_M \sim \operatorname{Unif}} \left[\utothatemp(\bf{i})\right] \\
        &= \sum_{\bm{i}_1=1}^{n} \w{\bm{i}_1}  \mathbb{E}_{\bm{i}_2,...,\bm{i}_M \sim \operatorname{Unif}} \left[\utothatemp(\bf{i})\right] \\
        &= \sum_{\bm{i}_1=1}^{n} \mathbb{E}_{\bm{i}_2,...,\bm{i}_M \sim \operatorname{Unif}} \left[\w{\bm{i}_1} \utothatemp(\bf{i})\right] \\
        &= n \sum_{\bm{i}_1=1}^{n} \frac{1}{n} \mathbb{E}_{\bm{i}_2,...,\bm{i}_M \sim \operatorname{Unif}} \left[\w{\bm{i}_1} \utothatemp(\bf{i})\right] \\
        &= n \; \mathbb{E}_{\bm{i}_1 \sim \operatorname{Unif}} \mathbb{E}_{\bm{i}_2,...,\bm{i}_M \sim \operatorname{Unif}} \left[\w{\bm{i}_1} \utothatemp(\bf{i})\right] \\
        &= n \; \mathbb{E}_{\bm{i}_1,...,\bm{i}_M \sim \operatorname{Unif}} \left[\w{\bm{i}_1} \utothatemp(\bf{i})\right]
        \label{eq_expectation_refactored}
    \end{align}

    The expression in \Cref{eq_expectation_refactored} is invariant with respect to order of the indices $\bm{i}_1, \dots, \bm{i}_M$: the indices in expectation in \Cref{eq_expectation_refactored} can be exchanged, and one thus has
    \begin{align}
        \textcolor{darkgreen}{\forall k \in \llbracket 1, M \rrbracket} , \;
        \mathbb{E} \left[ \utothatemp({\bf{i}}) \right] = n \; \mathbb{E}_{\bm{i}_1,...,\bm{i}_M \sim \operatorname{Unif}} \left[\w{{\textcolor{darkgreen}{\bm{i}_k}}} \utothatemp(\bf{i})\right]
        \label{eq_expectation_refactored_k} \enspace.
    \end{align}

    Averaging \Cref{eq_expectation_refactored_k} over the indices $k \in \llbracket 1, M \rrbracket$ yields the desired result

    \begin{align}
        \frac{1}{M} \sum_{k=1}^M \mathbb{E} \utothatemp(\bf{i}) &= \frac{1}{M} \sum_{k=1}^M n \; \mathbb{E}_{\bm{i}_1,...,\bm{i}_M \sim \operatorname{Unif}} \left[\w{\bm{i}_k} \utothatemp({\bf{i}})\right] \\
        \mathbb{E} \utothatemp(\bf{i}) &= \frac{1}{M} n \; \mathbb{E}_{\bm{i}_1,...,\bm{i}_M \sim \operatorname{Unif}} \left[\sum_{k=1}^M \w{\bm{i}_k} \utothatemp({\bf{i}})\right] \\
        &= \frac{1}{M} n \; \mathbb{E}_{\bm{i}_1,...,\bm{i}_M \sim \operatorname{Unif}} \left[\sum_{k=1}^M \w{\bm{i}_k} \frac{\sum_{j=1}^{M} \w{\bf{i}_j} \uc{\bf{i}_j}}{\sum_{j=1}^{M} \w{\bf{i}_j}} \right] \\
        &= \frac{1}{M} n \; \mathbb{E}_{\bm{i}_1,...,\bm{i}_M \sim \operatorname{Unif}} \left[\left(\sum_{k=1}^M \w{\bm{i}_k}\right) \frac{\sum_{j=1}^{M} \w{\bf{i}_j} \uc{\bf{i}_j}}{\left(\sum_{j=1}^{M} \w{\bf{i}_j}\right)} \right] \\
        &= \frac{1}{M} n \; \mathbb{E}_{\bm{i}_1,...,\bm{i}_M \sim \operatorname{Unif}} \left[ \sum_{j=1}^{M} \w{\bf{i}_j} \uc{\bf{i}_j} \right] \\
        &= \frac{1}{M} n \; \sum_{j=1}^{M} \mathbb{E}_{\bm{i}_1,...,\bm{i}_M \sim \operatorname{Unif}} \left[  \w{\bf{i}_j} \uc{\bf{i}_j} \right] \\
        &= \frac{1}{M} n \; \sum_{j=1}^{M} \mathbb{E}_{\bm{i}_j \sim \operatorname{Unif}} \left[  \w{\bf{i}_j} \uc{\bf{i}_j} \right] \\
        &= \frac{1}{M} n \; M \; \mathbb{E}_{l \sim \operatorname{Unif}} \left[  \w{l} \uc{l} \right] \\
        &= n \; \mathbb{E}_{l \sim \operatorname{Unif}} \left[  \w{l} \uc{l} \right] \\
        &= n \; \sum_{l=1}^n \frac{1}{n} \left[  \w{l} \uc{l} \right] \\
        &= \sum_{l=1}^n \left[  \w{l} \uc{l} \right] \\
        &= \utothat
    \end{align}
\end{proof}

%%%%%%%%%%%%%%%%%%%%%%%%%%%%%%%%%%%%%%%%%%%%%%%%%%%%%%%%%%%%%%%
\begin{proof}[Proof of Item~\eqref{app_prop_variance}.]
    Using the same ideas as for Item~\eqref{app_prop_unbiased}, one has
    \begin{align}
        &\mathbb{E}_{\data{1} \sim \pdatahat(\cdot | x_t, t ) \; ; \;\batch{2}, \dots, \batch{M} \sim \pdatahat }
        \left[\utothatemp(x_t, t)^2\right]
        \\
        &=
        n \mathbb{E}_{\bm{i}_1, \dots, \bm{i}_M \sim \mathrm{Unif}}
        \left [
            % \w{\bm{i}}
            w^{(\bm{i}_1)} \utothatemp(\bm{i})^2
            \right ]
        \\
        &=
        n \mathbb{E}_{\bm{i}_1, \dots, \bm{i}_M \sim \mathrm{Unif}}
        \left [
            % \w{\bm{i}}
            w^{(\bm{i}_k)} \utothatemp(\bm{i})^2
            \right ], \forall k \in \llbracket 1, M\rrbracket
        \\
        &=
        n \frac{1}{M} \mathbb{E}_{\bm{i}_1, \dots, \bm{i}_M \sim \mathrm{Unif}}
        \left [
            \sum_{k=1}^M w^{(\bm{i}_k)} \utothatemp(\bm{i})^2
        \right ]
        \\
        &=
        n \frac{1}{M} \mathbb{E}_{\bm{i}_1, \dots, \bm{i}_M \sim \mathrm{Unif}}
        \left [
            \sum_{k=1}^M w^{(\bm{i}_k)}
            \left (\frac{\sum_{j=1}^{M}
            w^{(\bm{i}_j)} u^{(\bm{i}_j)}
            }{
                \sum_{j=1}^{M} w^{(\bm{i}_j)}
            } \right )^2
            % \utothatemp(\bm{i})^2
        \right ]
        \\
        & \leq
        n \frac{1}{M} \mathbb{E}_{\bm{i}_1, \dots, \bm{i}_M \sim \mathrm{Unif}}
        \left [
            \sum_{k=1}^M w^{(\bm{i}_k)}
                \frac{
            % \left (
            \sum_{j=1}^{M}
            w^{(\bm{i}_j)} (u^{(\bm{i}_j)})^2
            % \right )^2
            }{
                \sum_{j=1}^{M} w^{(\bm{i}_j)}
            }
        \right ]
        \; \text{by convexity of }  x \mapsto x^2
        \\
        &\phantom{\leq}=
        n \frac{1}{M} \mathbb{E}_{\bm{i}_1, \dots, \bm{i}_M \sim \mathrm{Unif}}
        \left [
            \left( \sum_{k=1}^M w^{(\bm{i}_k)} \right)
                \frac{
            % \left (
            \sum_{j=1}^{M}
            w^{(\bm{i}_j)} (u^{(\bm{i}_j)})^2
            % \right )^2
            }{
                \sum_{j=1}^{M} w^{(\bm{i}_j)}
            }
        \right ]
        \\
        &\phantom{\leq}=
        n \frac{1}{M} \mathbb{E}_{\bm{i}_1, \dots, \bm{i}_M \sim \mathrm{Unif}}
        \left [
            \sum_{j=1}^{M}
            w^{(\bm{i}_j)} (u^{(\bm{i}_j)})^2
        \right ]
        \\
        &\phantom{\leq}=
        \mathbb{E}_{\bm{i}_1 \sim \mathrm{Unif}}
            \left[w^{(\bm{i}_1)} (u^{(\bm{i}_1)})^2\right]
        \\
        &\phantom{\leq}=
        \mathbb{E}_{l \sim \mathrm{Unif}}
            \left[w^{(l)} (u^{(l)})^2\right]
            \enspace.
    \end{align}
    Hence
    \begin{align}
        \mathbb{E}_{\data{1} \sim \pdatahat(\cdot | x_t, t ) \; ; \;\batch{2}, \dots, \batch{M} \sim \pdatahat }
        \left[\utothatemp(x_t, t)^2\right]
        -
        (\utothat)^2
        \leq
        \mathbb{E}_{l \sim \mathrm{Unif}}
        \left[w^{(l)} (u^{(l)})^2\right]
        -
        (\utothat)^2
        \enspace,
    \end{align}
    which is exactly
    \begin{align}
        \Var_{\data{1} \sim \pdatahat(\cdot | x_t, t ) \; ; \;\batch{2}, \dots, \batch{M} \sim \pdatahat }
        \left[\utothatemp(x_t, t)\right]
        \leq
        \Var_{\data{1} \sim \pdatahat(\cdot | x_t, t )}
        \left[\ucond(x_t, \data{1}, t)\right]
        \enspace.
    \end{align}
\end{proof}
%%%%%%%%%%%%%%%%%%%%%%%%%%%%%%%%%%%%%%%%%%%%%%%%%%%%%%%%%%%%%%%

% !TEX root = ../main.tex

%%%%%%%%%%%%%%%%%%%%%%%%%%%%%%%%%%%%%%%%%%%%%%%%%%%%%%%%%%%%%%
%%%%%%%%%%%%%%%%%%%%%%%%%%%%%%%%%%%%%%%%%%%%%%%%%%%%%%%%%%%%%%
%%%%%%%%%%%%%%%%%%%%%%%%%%%%%%%%%%%%%%%%%%%%%%%%%%%%%%%%%%%%%%
\section{Additional experiments}
%%%%%%%%%%%%%%%%%%%%%%%%%%%%%%%%%%%%%%%%%%%%%%%%%%%%%%%%%%%%%%
%%%%%%%%%%%%%%%%%%%%%%%%%%%%%%%%%%%%%%%%%%%%%%%%%%%%%%%%%%%%%%
% \begin{table}[h!]
%     \centering
%     \begin{tabular}{lccccccccc}
%     \hline
%     \textbf{Method} & \textbf{Ep. 1} & \textbf{Ep. 2} & \textbf{Ep. 3} & \textbf{Ep. 4} & \textbf{Ep. 5} & \textbf{Ep. 10} & \textbf{Ep. 15} & \textbf{Ep. 20} & \textbf{Ep. 25} \\
%     \hline
%     CFM (EFM, M=1)   & 378.00 & 181.25 & 67.88 & 29.44 & 15.30 & 4.20 & 3.08 & 2.51 & 2.28 \\
%     EFM, M=128        & 370.64 & 168.58 & 60.52 & 25.52 & 13.44 & 3.79 & 2.70 & 2.35 & 2.10 \\
%     EFM, M=256        & 370.94 & 169.71 & 61.88 & 25.73 & 13.48 & 3.73 & 2.76 & 2.33 & 2.08 \\
%     EFM, M=1024       & 369.72 & 168.43 & 60.28 & 24.24 & 12.26 & 3.30 & 2.67 & 2.17 & 1.84 \\
%     \bottomrule
%     \addlinespace[0.5em]
%     \end{tabular}
%     \caption{\textbf{FID MNIST}. FID scores across training epochs for conditional flow matching and empirical flow matching for multiple values of the number of samples $M$ used to estimate the closed-form $\utothat$.}
%     \label{tab:fid_mnist}
% \end{table}

We present below the results for the MNIST dataset. The conclusions atre the same as for the CIFAR-10 and CelebA $64 \times 64$: regressing against a more deterministic velocity field does not hurt generalization. On the contrary, generalization (\ie lower test FID) appears earlier during training.

For this experiment, we used the Unet with attention and timestep embedding from \texttt{torchcfm} library, with the Adam optimizer and all the default parameters. We used a pretrained classifier with $99\%$ accuracy on MNIST ($90\%$ on FMNIST) as a lower-dimensional embedding of size $128$ to compute the FID between the test set and the generated set.

\begin{table}[H]
    \centering
    \setlength{\tabcolsep}{6pt} % adjust column spacing
    \renewcommand{\arraystretch}{1.2} % adjust row spacing
    \begin{tabular}{lccccccccc}
    \toprule
    \textbf{Method} & \textbf{Ep. 1} & \textbf{Ep. 2} & \textbf{Ep. 3} & \textbf{Ep. 4} & \textbf{Ep. 5} & \textbf{Ep. 10} & \textbf{Ep. 15} & \textbf{Ep. 20} & \textbf{Ep. 25} \\
    \midrule
    CFM (EFM, M=1)   & 378.00 & 181.25 & 67.88 & 29.44 & 15.30 & 4.20 & 3.08 & 2.51 & 2.28 \\
    EFM, M=128        & 370.64 & 168.58 & 60.52 & 25.52 & 13.44 & 3.79 & 2.70 & 2.35 & 2.10 \\
    EFM, M=256        & 370.94 & 169.71 & 61.88 & 25.73 & 13.48 & 3.73 & 2.76 & 2.33 & 2.08 \\
    EFM, M=1024       & 369.72 & 168.43 & 60.28 & 24.24 & 12.26 & 3.30 & 2.67 & 2.17 & 1.84 \\
    \bottomrule
    \addlinespace[0.75em]
    \end{tabular}
    \caption{\textbf{FID FMNIST}. FID scores across training epochs for conditional flow matching and empirical flow matching for multiple values of the number of samples $M$ used to estimate the closed-form $\utothat$.}
    \label{tab:fid_mnist}
\end{table}

\begin{table}[H]
    \centering
    \begin{tabular}{lccccc}
    \toprule
    \textbf{Method} & \textbf{FID Ep. 5} & \textbf{FID Ep. 10} & \textbf{FID Ep. 50} & \textbf{FID Ep. 100} & \textbf{FID Ep. 200} \\
    \midrule
    CFM (EFM, M=1)   & 253.56 & 48.67 & 25.36 & 21.35 & 19.67 \\
    EFM, M=128        & 206.27 & 44.08 & 23.39 & 19.63 & 17.72 \\
    EFM, M=256        & 202.62 & 45.06 & 22.16 & 20.08 & 17.74 \\
    EFM, M=512        & 194.66 & 44.19 & 22.10 & 18.93 & 16.85 \\
    \bottomrule
    \addlinespace[0.5em]
    \end{tabular}
    \caption{\textbf{FID FMNIST}. FID scores across training epochs for conditional flow matching and empirical flow matching for multiple values of the number of samples $M$ used to estimate the closed-form $\utothat$.}
    \label{tab:fid_fmnist}
\end{table}

% !TEX root = ../main.tex

%%%%%%%%%%%%%%%%%%%%%%%%%%%%%%%%%%%%%%%%%%%%%%%%%%%%%%%%%%%%%%%%%%%%%%%%%%%%%%%%%%
\section{Experiments details}
\label{app:expes_details}
%%%%%%%%%%%%%%%%%%%%%%%%%%%%%%%%%%%%%%%%%%%%%%%%%%%%%%%%%%%%%%%%%%%%%%%%%%%%%%%%%%
For all the experiment we used all the same learning hyperparameters, the default ones form \citet{tong2023improving}.
The hyperparameter values are summarized in \Cref{app_tab_hyperparam_cifar}.
The details specific to each figure are described in \Cref{app_expes_details_hist_cosine,app_expes_dist_ustar,app_expes_umix,app_expes_fid_cifar}

\begin{table}[h]
    \centering
    \begin{tabular}{lcccc}
    \toprule
    $\#$ Channels  & Batch Size & Learning Rate & EMA Decay & Gradient Clipping \\
    \midrule
    128 & 128 & 0.0002 & 0.9999 & 1  \\
    % Row 2 Col 1 & Row 2 Col 2 & Row 2 Col 3 \\
    \bottomrule
    \end{tabular}
    \caption{Learning hyperparameters for all the CIFAR-10 and CelebA $64$ experiments.}
    \label{app_tab_hyperparam_cifar}
\end{table}

%%%%%%%%%%%%%%%%%%%%%%%%%%%%%%%%%%%%%%%%%%%%%%%%%%%%%%%%%%%%%%%%%%%%%%%%%%%%%%%%
%%%%%%%%%%%%%%%%%%%%%%%%%%%%%%%%%%%%%%%%%%%%%%%%%%%%%%%%%%%%%%%%%%%%%%%%%%%%%%%%
\subsection{Compute time}
%%%%%%%%%%%%%%%%%%%%%%%%%%%%%%%%%%%%%%%%%%%%%%%%%%%%%%%%%%%%%%%%%%%%%%%%%%%%%%%%
%%%%%%%%%%%%%%%%%%%%%%%%%%%%%%%%%%%%%%%%%%%%%%%%%%%%%%%%%%%%%%%%%%%%%%%%%%%%%%%%
Given that regressing against an estimate of the closed-form, EFM, seems to improve on CFM, one may wonder what is the additional cost induced by EFN. To alleviate the non-linearity of GPU computing (parallelism may cause some discontinuities in terms of costs), we ran an exhaustive set of timing experiments, varying the batch size and the EFM sample size. To summarize the measurements (numbers are given for an NVIDIA L4 GPU, on CIFAR-10), denoting $b$ the batch size and $e$ the EFM sample size, the cost follows $b \times ( 4.3ms + e \times 0.9\mu s)$. It can be also be seen as adding $\sim 2\%$ for every 100 EFM samples. Or, for instance with a batch size of 256, $1.1$ second will be due to the $256$-sample forward/backward, while the additional cost for EFM-$1000$ will be $230$ms (around $17\%$ of the cost) and for EFM-$128$ under $30$ms (under $3\%$).

%%%%%%%%%%%%%%%%%%%%%%%%%%%%%%%%%%%%%%%%%%%%%%%%%%%%%%%%%%%%%%%%%%%%%%%%%%%%%%%%%%
\subsection{\Cref{fig:hist_cosine,fig:collapse_times}}
\label{app_expes_details_hist_cosine}
%%%%%%%%%%%%%%%%%%%%%%%%%%%%%%%%%%%%%%%%%%%%%%%%%%%%%%%%%%%%%%%%%%%%%%%%%%%%%%%%%%
For \Cref{fig:hist_cosine} no deep learning is involved: the datasets 2-moons and CIFAR-10 are loaded.
Then, $256$ points from $p_0 \times \pdatahat$ are drawn, and one computes the mean of the cosine similarities between $\utothat((1-t) x_0 + t x_1, t)$ and $\ucond((1-t)  x_0 + t  x_1, z=x_1, t) = x_1 - x_0$, for each value of $t \in \{0, 1/100, 2/100, \dots, 99/100 \}$.

No deep learning either is involved in \Cref{fig:collapse_times}: the Imagenette dataset is loaded and spatially subsampled to resolution $\mathrm{dim}=8$, $\mathrm{dim}=16$, \dots, $\mathrm{dim}=256$, \ie with $d=" \cdot 8^2$, $d=3 \cdot 16^2$, \dots, $d=3 \cdot 256^2$.
Then, as for \Cref{fig:hist_cosine}, batches of $256$ points from $p_0$ and $\pdata$ are drawn, and one computes the percentage of cosine similarities between $\utothat((1-t) x_0 + t x_1, t)$ and $\ucond((1-t)  x_0 + t  x_1, z=x_1, t) = x_1 - x_0$, that are larger than $0.9$, for multiple time values $t$.

%%%%%%%%%%%%%%%%%%%%%%%%%%%%%%%%%%%%%%%%%%%%%%%%%%%%%%%%%%%%%%%%%%%%%%%%%%%%%%%%%%
\subsection{\Cref{fig_dist_ustar}}
\label{app_expes_dist_ustar}
%%%%%%%%%%%%%%%%%%%%%%%%%%%%%%%%%%%%%%%%%%%%%%%%%%%%%%%%%%%%%%%%%%%%%%
In \Cref{fig_dist_ustar}, networks are trained with a vanilla conditional flow matching, with the standard 34 million parameters U-Net for diffusion by \citet{nichol2021improved}, with default settings from the \texttt{torchfm} codebase \footnote{\url{https://github.com/atong01/conditional-flow-matching}} \citep{tong2023improving}.
Training uses the CFM loss.
For this specific experiment, \textbf{we removed the usual random flip transform}, for $\utothat$ to be simpler and easier to estimate by $\utheta$.
For each ``data'' subsampling of the dataset, we trained the model for $5 \cdot 10^4$ iterations, with a batch size of $128$,  \ie we trained the models for $128$ epochs.

% Then, as for \Cref{fig:hist_cosine}, batches of $256$ points from $p_0$ and $\pdata$ are drawn, and one computes the mean of the cosine similarities between $\utothatemp((1-t) x_0 + t x_1, t)$ and $\ucond((1-t)  x_0 + t  x_1, z=x_1, t) = x_1 - x_0$, for each value of $t \in \{0, 1/100, 2/100, \dots, 99/100 \}$.

%%%%%%%%%%%%%%%%%%%%%%%%%%%%%%%%%%%%%%%%%%%%%%%%%%%%%%%%%%%%%%%%%%%%%%%%%%%%%%%%%%%%%%%%%%%%%%
\subsection{\Cref{fig:umix}}
\label{app_expes_umix}
%%%%%%%%%%%%%%%%%%%%%%%%%%%%%%%%%%%%%%%%%%%%%%%%%%%%%%%%%%%%%%%%%%%%%%%%%%%%%%%%%%
In \Cref{fig:umix}, for each dataset (CIFAR-10 and CelebA $64 \times 64$), one network is trained using a vanilla conditional flow matching with the default parameters of \citet{tong2023improving} (the most important ones are recalled in \Cref{app_tab_hyperparam_cifar}).
Then images are generated first following the closed-form formula of the optimal velocity field $\utothat$ from $0$ to $\tau$. And then following the velocity field learned with a usual conditional flow matching $\utheta$ from $\tau$ to $1$.

%%%%%%%%%%%%%%%%%%%%%%%%%%%%%%%%%%%%%%%%%%%%%%%%%%%%%%%%%%%%%%%%%%%%%%%%%%%%%%%%%%%%%%%%%%%%%%
\subsection{\Cref{fig_fid_cifar}}
\label{app_expes_fid_cifar}
%%%%%%%%%%%%%%%%%%%%%%%%%%%%%%%%%%%%%%%%%%%%%%%%%%%%%%%%%%%%%%%%%%%%%%%%%%%%%%%%%%
For experiments involving training on CIFAR-10 (\Cref{fig_dist_ustar,fig:umix}), we rely on the standard 34 million parameters U-Net for diffusion by \citet{nichol2021improved}, with default settings from the \texttt{torchfm} codebase \citep{tong2023improving}.
For each algorithm, the networks are trained for $500$k iterations with batch size $128$, \ie $1280$ epochs.

For CelebA $64 \times 64$ (\Cref{fig:umix}), we rely on the training script of \texttt{pnpflow} library\footnote{\url{https://github.com/annegnx/PnP-Flow}} \citep{martin2024pnpflow}, which uses a U-Net from \citet{huang2021variational,Ho2020}.

\end{document}